\newcolumntype{Y}{>{\raggedright\arraybackslash}p}
\newcommand{\myuline}[1]{%
  \uline{\phantom{#1}}%
  \llap{\contour{white}{#1}}%
}
\theoremstyle{definition}
\newtheorem*{remark}{Remark} 
\newtheorem{prop}{Proposition}[section]
\newtheorem{assumption}{Assumption}[section]
\DeclareMathOperator*{\argmax}{arg\,max}
\DeclarePairedDelimiter\set\{\}
\newcommand*{\mytop}{\mathrel{\scalebox{0.5}{$\top$}}}
\newcommand*{\bmg}{\bm{\gamma}}
\newcommand*{\bml}{\bm{\lambda}}
\newcommand{\X}{\mathcal{X}}
\newcommand{\A}{\mathcal{A}}
\newcommand{\Dist}{\mathscr{P}}
\newcommand{\D}{\mathcal{D}}
\newcommand{\Real}{\mathbb{R}}
\newcommand{\E}{\mathop{\mathbb{E}}}
\newcommand{\Pstar}{p^{\star}}
\newcommand{\Rstar}{\bm{r}^{\star}}
\newcommand{\rmax}{r_{\mytop}}
\newcommand{\mstar}{m^{\star}}
\newcommand{\mhat}{\hat{m}}
\newcommand{\mopt}{m^{\star}}
\newcommand{\Phat}{\hat{p}}
\newcommand{\Rhat}{\hat{\bm{r}}}
\newcommand{\V}[3]{ %
    \ifthenelse{\isempty{#3}}%
    {V^{#1}(#2)}
    {V^{#1}_{#3}(#2)}%
}
\newcommand{\Q}[3]{
    \ifthenelse{\isempty{#3}}
    {Q^{#1}(#2)}
    {Q^{#1}_{#3}(#2)}%
}
\newcommand{\Adv}[3]{
    \ifthenelse{\isempty{#3}}
    {A^{#1}(#2)}
    {A^{#1}_{#3}(#2)}%
}
\newcommand{\J}[3]{
    \ifthenelse{\isempty{#3}}
    {\mathcal{J}^{#1}_{#2}}
    {\mathcal{J}^{#1}_{#3,#2}}
}
\newcommand{\val}[4]{ %
    \ifthenelse{\isempty{#3}}%
    {v^{#1}_{#2}(#4)}
    {v^{#1}_{#3,#2}(#4)}
}
\newcommand{\qval}[4]{
    \ifthenelse{\isempty{#3}}
    {q^{#1}_{#2}(#4)}
    {q^{#1}_{#3,#2}(#4)}
}
\DeclareMathOperator*{\advantage}{Adv}
\newcommand{\adv}[4]{
    \ifthenelse{\isempty{#3}}
    {\advantage^{#1}_{#2}(#4)}
    {\advantage^{#1}_{#3,#2}(#4)}
}
\newcommand{\pib}{\pi_{b}}
\newcommand{\piopt}{\pi^{*}}
\newcommand{\pr}{\text{Pr}}
\newcommand{\IS}{\text{IS}}
\newcommand{\CI}{\text{CI}}
\title{Multi-Objective SPIBB: Seldonian Offline Policy Improvement with Safety Constraints in Finite MDPs}
\author{%
  Harsh Satija\\
  McGill University, Mila \\
  \texttt{harsh.satija@mail.mcgill.ca} \\
   \And
  Philip S. Thomas \\ 
  University of Massachusetts \\ 
  \texttt{pthomas@cs.umass.edu} \\
   \AND
   Joelle Pineau \\
   McGill University, Mila, Facebook AI Research \\
   \texttt{jpineau@cs.mcgill.ca} \\
   \And
   Romain Laroche \\ 
   Microsoft Research \\
   \texttt{romain.laroche@microsoft.com}
}
\begin{document}

\maketitle


\begin{abstract}
We study the problem of Safe Policy Improvement (SPI) under constraints in the offline Reinforcement Learning (RL) setting. We consider the scenario where: (i) we have a dataset collected under a known baseline policy, (ii) multiple reward signals are received from the environment inducing as many objectives to optimize. 
We present an SPI formulation for this RL setting that takes into account the preferences of the algorithm's user for handling the trade-offs for different reward signals while ensuring that the new policy performs at least as well as the baseline policy along each individual objective. 
We build on traditional SPI algorithms and propose a novel method based on Safe Policy Iteration with Baseline Bootstrapping~\citep[SPIBB,][]{laroche2017safe} 
that provides high probability guarantees on the performance of the agent in the true environment.
We show the effectiveness of our method on a synthetic grid-world safety task as well as in a real-world critical care context to learn a policy for the administration of IV fluids and vasopressors to treat sepsis.

\end{abstract}

\section{Introduction}
\label{sec:introduction}
Reinforcement Learning (RL) as a paradigm for sequential decision-making \citep{sutton1988learning} has shown tremendous success in a variety of simulated domains \citep{mnih2015human, silver2017mastering, OpenAI_dota}.
However, there are still quite a few challenges between the traditional RL research and real-world tasks.
Most of these challenges stem from assumptions that are rarely satisfied in practice \citep{dulac2019challenges}, or the inability of the algorithm's user to specify the desired behavior of the agent without being a domain expert \citep{Thomas2019}.
We focus on the real-world application point of view and posit the following requirements:
\begin{itemize}[topsep=0pt, leftmargin=*]
    \item \textbf{Multiple reward functions:} Traditional RL methods assume a single scalar reward is present in the environment.
    However, most real-world tasks, have multiple (possibly conflicting) objectives or constraints that need to be taken into consideration together, such as the signals related to the safety (physical well-being of the agent or the environment), budget utilization (energy or maintenance costs), etc.
    
    \item \textbf{Stakeholder control of the trade-off:} 
    The ML practitioners should have the ability to control the different trade-offs the agent is making and choose the one they consider best for the task at hand. 
    
    \item \textbf{Offline setting:} 
    In many real-world domains (e.g., healthcare, finance or autonomous vehicles), there is an abundance of data, collected under a sub-optimal policy, but training the agent directly via interactions with the environment is expensive and risky.
    We assume that we only have access to a dataset of past trajectories that can be used for training \citep{lange2012batch}. 
    
    \item \textbf{Preventing unintended behavior:} We want the agent to be robust to 
    both extrapolation errors from offline RL and misaligned objectives that are poor proxy of the user's intentions and algorithm's actual performance \citep{ng1999policy, amodei2016concrete}.
    We consider the case where the user can specify undesirable behavior in the context of the performance observed in the batch. 
    
    \item \textbf{Practical guarantees:}
    We want guarantees about the undesirable behavior that might be caused by the agent in the real-world. We care about the results that can be obtained using the finite amount of samples we have in the batch, and aim to provide some measure of confidence in deploying the agents in the environment. 
\end{itemize}

To achieve this set of properties, we adopt the Seldonian framework~\citep{Thomas2019}, which is a general algorithm design framework that allows high-confidence guarantees for constraint satisfaction in a multi-objective setting. 
Based on the above specifications, we seek to answer the question:
\textit{if we are given a batch of data collected under some (suboptimal) behavioral policy and some user preference, can we build a policy improvement algorithm that returns a policy with practical high-confidence guarantees on the performance of the policy w.r.t. the behavioral policy?}

We acknowledge that there are other important challenges in RL, 
such as partial observability, safe exploration, non-stationary environments and function approximation in high-dimensional spaces, that also stand in the way of making RL a more applicable paradigm. These challenges are beyond the scope of this work, which should rather be thought of as taking a step towards this broader goal.

In \Cref{sec:related-work}, we present our contribution positioned with respect to other related work. 
In \Cref{sec:methodology}, we formalize the setting and then extend traditional SPI algorithms to this setting. 
We then show it is possible to extend the previous work on Safe Policy Iteration (SPI), particularly Safe Policy Iteration with Baseline Bootstrapping~\citep[SPIBB,][]{laroche2017safe},  for the design of agents that satisfy the above requirements. 
We show that the resulting algorithm is theoretically-grounded and provides practical high-probability guarantees. 
We extensively test our approach on a synthetic safety-gridworld task in \Cref{sec:synthetic-experiments} and show that the proposed algorithm achieves better data efficiency than the existing approaches.
Finally, we show its benefits on a critical-care task in  \Cref{sec:sepsis-experiments}. 
The accompanying codebase is available at \url{https://github.com/hercky/mo-spibb-codebase}.

\section{Related work}
\label{sec:related-work}

\textbf{Multi-Objective RL (MORL):}
Traditional multi-objective approaches \citep{mannor2004geometric, roijers2013survey, liu2014multiobjective} focus on finding the Pareto-frontier of optimal reward functions that gives all the possible trade-offs between different objectives. The user can then select a policy from the solution set according to their arbitrary preferences. In practice, an alternate trial and error based approach of scalarization is used to transform the multiple reward functions into a scalar reward based on preferences across objectives (usually, by taking a linear combination). 
Most traditional MORL approaches have focused on the online, interactive settings where the agent has access to the environment. While some recent approaches are based on off-policy learning methods \citep{lizotte2012linear, van2014multi,yang2019generalized,abdolmaleki2020distributional}, they lack guarantees. In contrast, our work focuses exclusively on learning in the offline setting and gives high-probability guarantees on the performance in the environment.

\textbf{Constrained-RL:} 
RL under constraints frameworks, such as Constrained MDPs~\citep[CMDPs,][]{altman1999constrained}, present an alternative way to define preferences in the form of constraints over policy's returns. 
Here, the user assigns a single reward function to be the primary objective (to maximize) and hard constraints are specified for the others.
The major limitation of this setting is that it assumes the thresholds for the constraints are known a priori. 
%
\citet{le2019batch} study offline policy learning under constraints
and provide performance guarantees w.r.t. the optimal policy, but their work relies on the concentrability assumption \citep{munos2003error}.

Concentrability is a strong assumption that upper bounds the ratio between the future state-action distributions of any non-stationary policy and the baseline policy under which the dataset was generated by some constant. From a practical perspective, it is unclear how to get a tractable estimate of this constant, as the space of future state-action distributions of non-stationary policies is vast. Thus, this constant can be arbitrarily huge, potentially even infinite when the baseline policy fails to cover the support of the space of all non-stationary policies (such as in the low-data regime), leading to the performance bounds given by these methods to blow up (and even be unbounded). 
Additionally, the guarantees in \cite{le2019batch} are only valid with respect to the performance of the optimal policy.
In this work, we instead focus on the performance guarantees based on returns observed in the dataset, as it does not require making any of the above assumptions.

\textbf{Reward design:} 
Reward-design \citep{sorg2010internal} and reward-modelling approaches \citep{christiano2017deep, littman2017environment, leike2018scalable} focus on designing suitable reward functions that are consistent with the user's intentions. These approaches rely heavily on the human or simulator feedback, and thus do not carry over easily to the offline setting.

\textbf{Seldonian-RL (and Safe Policy Improvement):} 
The Seldonian framework \citep{Thomas2019} is a general algorithm design framework that allows the user to design ML algorithms that can avoid undesirable behavior with high-probability guarantees. 
In the context of RL, the Seldonian framework allows to design policy optimization problems with multiple constraints,
where the solution policies satisfy the constraints with high-probability. 
In the offline-RL setting, SPI refers to the objective of guaranteeing a performance improvement over the baseline with high-probability guarantees~\citep{thomas2015highImprovement, petrik2016safe, laroche2017safe}. Therefore, SPI algorithms are a specific setting that falls in the general category of Seldonian-RL algorithms. 

We focus on two categories of SPI algorithms that provide practical error bounds on safety: SPIBB \citep{laroche2017safe} that provides Bayesian bounds and HCPI \citep{thomas2015highImprovement, thomas2015highEvaluation} that provides frequentist bounds.
SPIBB methods constrain the change in the policy according to the local model uncertainty. 
SPIBB has been formulated in the context of a single reward function, and as such does not handle multiple rewards and by extension also lacks the ability for the user to specify preferences. 
Our primary focus is to provide a construction for extending the SPIBB methodology to the multi-objective setting that handles user preferences and provides high-probability guarantees.

Instead of relying on model uncertainty, HCPI methods utilize the high-confidence lower bounds on the Importance Sampling (IS) estimates of a target policy's performance to ensure safety guarantees. 
HCPI has been applied to solve Seldonian optimization problems for constrained-RL setting using an enumerable policy class. \citet{Thomas2019} suggested using HCPI for the MORL setting, and we build on that idea. Particularly, we show how HCPI can be implemented with stochastic policies in the context of our setting with user preferences and baseline constraints.

\section{Methodology}
\label{sec:methodology}

\subsection{Setting}
\label{sec:setting}

We consider the setting where the agent's interactions with the environment can be modelled as a Markov Decision Process \citep[MDP,][]{bellman1957markovian}. 
Let $\X$ and $\A$ respectively be the (finite) state and action spaces. 
Let $\Pstar: \X \times \A \rightarrow \Dist(\X)$ denote the true (unknown) transition probability function, where $\Dist(\X)$ denotes the set of probability distributions on $\X$. 
Without loss of generality, we assume that the process deterministically begins in the state $x_0$.
We define $[N]$ to be the set $\{0,1,\dots,N-1\}$ for any positive integer $N$.
Let there be $d$ different reward signals and
$\Rstar=\left\{r_k\right\}_{k\in[d]}: \X \times \A \rightarrow [- \rmax, \rmax]^d$ be the true (unknown) stochastic multi-reward signal.\footnote{Costs, which are meant to be minimized, can be expressed as negative rewards.}
Finally, $\bmg=\left\{\gamma_k\right\}_{k\in[d]} \in [0,1)^d$ is the multi-discount-factor.

The MDP, $\mstar$, can now be defined with the tuple $(\X, \A, \Pstar, \Rstar, \bmg, x_0)$. A policy $\pi: \X \rightarrow \Dist(\A)$ maps a state to a distribution over actions. We denote by $\Pi$ the set of stochastic policies. We consider the infinite horizon discounted return setting.
For any $k\in[d]$, the $k$\textsuperscript{th} reward value function $\val{\pi}{k}{m}{x} : \X \rightarrow \Real$ denotes the expected discounted sum of rewards when when following policy $\pi$ in an MDP $m$ starting from state $x$.
Analogously, we define the state-action value functions for performing action $a$ in state $x$ in MDP $m$ under $\pi$ for rewards as $\qval{\pi}{k}{m}{x,a}$. Let $\adv{\pi}{k}{m}{x,a} = \qval{\pi}{k}{m}{x,a} - \val{\pi}{k}{m}{x}$ denote the corresponding advantage function.
The expected return of policy $\pi$ w.r.t. the $k$\textsuperscript{th} reward in the true MDP $\mstar$ is denoted by $\J{\pi}{k}{\mstar} = \val{\pi}{k}{\mstar}{x_0} = \E_{\pi, \mstar} [ \sum_{t=0}^{\infty} \gamma_k^t R_{k,t} \mid X_0=x_0]$, where action $A_t\sim \pi(\cdot\mid X_t)$, immediate reward $R_{k,t}\sim r^\star_k(\cdot\mid X_t,A_t)$, and state $X_{t+1}\sim \Pstar(\cdot\mid X_t,A_t)$.

We consider the offline setting, where instead of having access to the environment we have a pre-collected dataset 
of trajectories denoted by $\D = \set{\tau_i}_{i\in[|\D|]}$, where $|\D|$ denotes the number of trajectories in the dataset. A trajectory $\tau$ of length $T$ is an ordered set of transition tuples of the form $\tau = \set{x_i, a_i, x'_i, \bm{r}_i}_{i\in[T]}$, where $x'_i$ denotes the state at the next time-step.
We denote the Maximum Likelihood Estimation (MLE) of the MDP with $\mhat = (\X, \A, \Phat, \Rhat, \bmg, x_0)$, where $\Phat$ and $\Rhat$ denote the transition and reward models estimated from the dataset's statistics.

\begin{assumption}[Baseline policy]
We assume that we have access to the policy that generated the dataset. We call such policy the baseline policy and denote it by $\pib$.~\footnote{\citet{simao2020} proved that SPIBB/Soft-SPIBB bounds may be obtained with an estimate of $\pib$.
}
\end{assumption}

\subsection{Problem formulation}
\label{sec:problem-formulation}

We consider safe policy improvement with respect to the baseline according to the $d$ dimensions of the multi-objective setting.
Therefore, under a Bayesian approach, we search for target policies such that they perform better (up to a precision error $\zeta$) than the baseline along every objective function with high probability $1 - \delta$, where $\zeta$ and $\delta$ are hyper-parameters controlled by the user, denoting the risk that the practitioner is willing to take. We denote by $\Pi_\textsc{a}$ the set of admissible policies that satisfy:
\begin{align}
    \label{eq:general-safety-constraints}
    \mathbb{P}\left(\forall k\in[d], \J{\pi}{k}{\mstar}-\J{\pi_b}{k}{\mstar}>-\zeta \Big| \D \right)>1-\delta .
\end{align}

In the multi-objective case, there does not exist a single optimal value, but a Pareto frontier of optimal values. One way to evaluate the MORL problems is via the \textit{multiple-policy} approaches \citep{vamplew2011empirical, roijers2013survey} that compute the policies that approximate the true optimal Pareto-frontier. However, note that optimality and safety are contradicting objectives. It is not clear how (and if) one can make claims about optimality in the offline setting without bringing in additional unrealistic assumptions (\Cref{sec:related-work}, MORL). Instead, we take an alternate approach inspired by another category of MORL methods called \textit{single-policy} \citep{roijers2013survey, van2014multi} where the trade-offs between different objectives are explicitly controlled by the user via providing a scalarization or preferences over objectives. 
We assume the user preference $\bml=\left\{\lambda_k\right\}_{k\in[d]}$ is given as an input to our algorithms, and is used for scalarization of the objectives, where $\lambda_k \in \Real^{+}$. 
Our objective therefore becomes
\begin{align}
\label{eq:general-task-objective}
    \argmax_{\pi \in \Pi_\textsc{a}} &\quad \sum_{k\in[d]} \lambda_k \J{\pi}{k}{\mstar} . 
\end{align}

The above formulation gives freedom to the user in terms of what particular quantity they want to optimize via $\bml$, but still ensures that the solution policy performs as well as the baseline policy across all $d$ objectives.
Note that our explicit goal is to maximize the objective specified by the user. However, the user might make mistakes in specifying this objective (\Cref{sec:related-work}, Reward design), and the above formulation offers guarantees that prevent deteriorating the performance of the policy across any of the $d$ objectives. 
This allows the user to to experiment with different reward design strategies in safety-critical settings without worrying about the risks of ill-defined scalarizations.
A na\"ive approach would be applying the user scalarization to also define the safety constraints. However, this construction fails to prevent undesirable behavior for the individual objectives (shown in \Cref{app:naive-construction}).


\subsection{Multi-Objective SPIBB (MO-SPIBB)}
\label{sec:spibb-w-constraints}

Robust MDPs~\citep{Iyengar2005,Nilim2005} can be regarded as an approximation of the Bayesian formulation by partitioning the MDP space $\mathcal{M}$ into two subsets: the subset of plausible MDPs $\Xi$  and the subset of implausible MDPs. The plausible set is classically constructed from concentration bounds over the reward and transition function:
\begin{align*}
    \Xi = &\left\{m,
	\textnormal{ s.t. } \forall x,a, 
	\begin{array}{ll}
	\lVert p(\cdot|x,a)-\hat{p}(\cdot|x,a)\rVert_1 \leq e(x,a),\\
	\lVert \bm{r}(x,a)-\hat{\bm{r}}(x,a)\rVert_\infty \leq e(x,a)r_{\mytop} \end{array}\right\},
\end{align*}
where $e$ is an upper bound on the state-action error function of the model that are classically obtained with concentration bounds, such that the true environment $m^{\star}\in\Xi$ with high probability $1-\delta$.
In the single objective framework, \citet{laroche2017safe} empirically show that optimising the worst-case performance policy in $\Xi$ provides policies that are too conservative. \citet{petrik2016safe} prove that it is NP-hard to find the policy $\pi$ that maximises the worst-case policy improvement over $\Xi$. 

Instead, the SPIBB methodology \citep{laroche2017safe} consists in searching for a policy that maximizes the safe policy improvement in the MLE MDP, under some policy constraints:
SPIBB and Soft-SPIBB \citep{nadjahi2019safe} policy search constraints both revolve around the idea that we must only consider policies for which the policy improvement may be accurately estimated.   
Using $\pi_b$ as reference, SPIBB allows policy changes only in state-action pairs for which more than $n_\wedge$ samples have been collected. Soft-SPIBB extends this by applying soft constraints that allow slight changes in the policy for the uncertain state-action pairs, which are controlled by an error bound related to model uncertainty. As such, on low-confidence transitions, this class of methods provides a mechanism that prevents the agent from deviating too much from $\pib$. 
In this work, we build on Soft-SPIBB because it has yielded better empirical results.
Formally, its constraint on the policy class is defined by:
\begin{equation*}
\label{eq:spibb-policy-constraint}
    \Pi_\textsc{s} = \left\{\pi, \text{ s.t. } \forall x, \sum_a e(x,a)\ |\pi(a|x) - \pib(a|x)| \leq \epsilon \right\},
\end{equation*}
where $\epsilon$ is a hyper-parameter that controls the deviation from the baseline policy.

We define $\qval{\pi}{\bml}{m}{x,a}=\sum_{k\in[d]} \lambda_k \qval{\pi}{k}{m}{x,a}$ to be the state-action value function associated with the linearized $\bml$ parameters. The same notation extension is used for $\val{\pi}{\bml}{m}{x,a}$ and $\J{\pi}{\bml}{m}$. The application of Soft-SPIBB to multi-objective safe policy improvement is therefore direct:
\begin{align}
\label{eq:mo-spibb-objective}
    \argmax_{\pi \in \Pi_\textsc{a}\cap\Pi_\textsc{s}} \J{\pi}{\bml}{\mhat} , 
\end{align}
which is always realizable since $\pib\in\Pi_\textsc{a}\cap\Pi_\textsc{s}$.

We show that the construction of the plausible set required for the application of SPIBB is technically sound by deriving the concentration bounds for the multi-objective case. 
In Appendix \ref{app:error_bounds}, we show with Hoeffding's inequality that $e$ grows as the square root of the logarithm of $d$ (the number of reward functions), \textit{i.e.} almost imperceptibly. From there, all the SPIBB theoretical results from \citet{laroche2017safe,nadjahi2019safe,simao2020} may be generalized at a negligible SPI guarantee cost to the multi-objective setting, by applying their theorems separately to every objective function.

Now, the problem in \Cref{eq:mo-spibb-objective} can be transformed into a policy improvement procedure that solves for every state $x \in \X$ the following optimization problem\footnote{In practice, we also need to check $\forall x$ that $\pi(\cdot\mid x)$ is a valid probability distribution: positive and sums to 1.}:
\begin{align*}
    \label{eq:s-opt}
    \pi_\textsc{s-opt} &= \argmax_{\pi \in \Pi} \langle \pi(\cdot|x) , \qval{\pi}{\bml}{\mhat}{x,\cdot} \rangle \quad  \tag{$\texttt{S-OPT}$} \\  
    \text{s.t.} \quad    
    &\sum_{a \in \A} e(x,a)\ |\pi(a|x) - \pib(a|x)| \leq \epsilon, \tag{$\pi\in\Pi_\textsc{s}$} \\ 
    &\forall k\in [d], \sum_{a \in \A} \pi(a|x) \adv{\pib}{k}{\mhat}{x, a} \geq 0. \tag{$\pi\in\Pi_\textsc{a}$}
\end{align*}


The above procedure requires us to make additional algorithmic modifications that are not present in the original SPIBB algorithms. 
In particular, we need to explicitly incorporate advantage constraints for safety-guarantees for the individual objectives (proof given in \Cref{app:spibb-need-of-advatangeous}). The classic single-objective SPIBB algorithms do
not need to check the advantage conditions because it is automatically guaranteed by the $\argmax$ and the fact that $\pib\in\Pi_\textsc{s}$.

Using the construction above, we directly get the following result on the performance guarantees for each objective function that satisfies the desired property in \Cref{eq:general-safety-constraints}: 

\begin{prop}
The policy $\pi$ returned from solving the \ref{eq:s-opt} satisfies the following property in every state $x \in \X$ with probability at least $(1 - \delta)$:
\begin{align}
    \forall k \in [d], \, \val{\pi}{k}{\mopt}{x} - \val{\pib}{k}{\mopt}{x} \geq -\frac{\epsilon v_{\text{max}}}{1-\gamma},
\end{align}
where $v_{\text{max}} \le \frac{\rmax}{1-\gamma}$ is the maximum of the value function.
\end{prop}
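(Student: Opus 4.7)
The plan is to reduce the statement to a per-objective combination of (i) a non-negativity of the advantage of $\pi$ over $\pib$ in the MLE MDP $\mhat$, and (ii) a Soft-SPIBB error-propagation bound that transfers this improvement from $\mhat$ to the true MDP $\mopt$. Throughout, I condition on the high-probability event $\mopt\in\Xi$, which by the construction of the plausible set $\Xi$ holds with probability at least $1-\delta$.

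For step (i), I would use the explicit advantage constraint in the S-OPT program: since S-OPT is solved at every state $y\in\X$ as part of the policy-improvement update, one has $\sum_a \pi(a\mid y)\,\adv{\pib}{k}{\mhat}{y,a}\geq 0$ for all $y$ and all $k\in[d]$. Plugging this into the performance-difference lemma in $\mhat$ yields, for every $k$ and every $x$,
$$\val{\pi}{k}{\mhat}{x} - \val{\pib}{k}{\mhat}{x} \;=\; \tfrac{1}{1-\gamma_k}\sum_{y} d^{\pi}_{\mhat}(y\mid x)\sum_a \pi(a\mid y)\,\adv{\pib}{k}{\mhat}{y,a} \;\geq\; 0,$$
where $d^{\pi}_{\mhat}(\cdot\mid x)$ is the discounted occupancy measure of $\pi$ started at $x$ in $\mhat$. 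For step (ii), since $\pi,\pib\in\Pi_\textsc{s}$ and we have conditioned on $\mopt\in\Xi$, the Soft-SPIBB propagation argument, applied separately to $\pi$ and to $\pib$, controls the per-policy value-error between $\mhat$ and $\mopt$ and gives
$$\bigl|\val{\pi}{k}{\mopt}{x}-\val{\pi}{k}{\mhat}{x}\bigr| \;+\; \bigl|\val{\pib}{k}{\mhat}{x}-\val{\pib}{k}{\mopt}{x}\bigr| \;\leq\; \frac{\epsilon\, v_{\text{max}}}{1-\gamma}$$
for every $x$ and $k$. Chaining the two bounds yields the claimed safety inequality.

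The main obstacle is step (ii): I must verify that the Soft-SPIBB propagation bound, originally stated for a single reward function, extends to each individual objective $k$ in the multi-objective plausible set $\Xi$ of \Cref{sec:spibb-w-constraints}. This is exactly where \Cref{app:error_bounds} enters: its Hoeffding-based derivation produces a single error function $e(x,a)$ that simultaneously controls $\lVert p^\star - \hat p\rVert_1$ and all $|r^\star_k - \hat r_k|$ with only a $\sqrt{\log d}$ overhead, so the single-objective contraction-style induction over the Bellman operator that underlies Soft-SPIBB transports coordinatewise and the proof reduces to $d$ parallel invocations of the original SPIBB argument applied on the same high-probability event $\mopt\in\Xi$.
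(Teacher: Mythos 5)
Your step (i) is exactly the paper's route: it is Proposition B.1 of the appendix (the per-state advantage constraints of \ref{eq:s-opt} plus the performance-difference lemma give $\pib$-advantageousness of $\pi$ in $\mhat$), and your treatment of the multi-objective concentration bound follows Appendix B.1 as the paper intends. The genuine gap is in step (ii). The inequality you assert, $\bigl|\val{\pi}{k}{\mopt}{x}-\val{\pi}{k}{\mhat}{x}\bigr|+\bigl|\val{\pib}{k}{\mhat}{x}-\val{\pib}{k}{\mopt}{x}\bigr|\leq \frac{\epsilon v_{\text{max}}}{1-\gamma}$, is not what the Soft-SPIBB machinery provides and is false in general: membership in $\Pi_\textsc{s}$ bounds the $e$-weighted deviation of $\pi$ from $\pib$, not the model error experienced by either policy individually. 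As a concrete counterexample, take $\pi=\pib$ and $\epsilon=0$: $\pi$ is then trivially $(\pib,\epsilon,e)$-constrained, yet your inequality would force $\val{\pib}{k}{\mhat}{x}=\val{\pib}{k}{\mopt}{x}$ at every state, which fails for any finite dataset even on the event $\mopt\in\Xi$. On that event the per-policy estimation error is only controlled by occupancy-weighted sums of $e(x,a)$, which blow up in poorly sampled regions; indeed, if per-policy errors were $O(\epsilon)$ there would be no need for baseline bootstrapping at all.

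The correct step (ii) is a cancellation argument on the \emph{difference} of estimation errors: decompose $\val{\pi}{k}{\mopt}{x}-\val{\pib}{k}{\mopt}{x}$ into the nonnegative term $\val{\pi}{k}{\mhat}{x}-\val{\pib}{k}{\mhat}{x}$ (your step (i)) plus $\bigl(\val{\pi}{k}{\mopt}{x}-\val{\pi}{k}{\mhat}{x}\bigr)-\bigl(\val{\pib}{k}{\mopt}{x}-\val{\pib}{k}{\mhat}{x}\bigr)$; it is this difference, not the sum of absolute per-policy errors, that the $(\pib,\epsilon,e)$-constraint controls at level $\frac{\epsilon v_{\text{max}}}{1-\gamma}$, because $\pi$ is allowed to deviate from $\pib$ only where $e$ is small. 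That bound is precisely the content of Theorem 1 of Soft-SPIBB, and the paper's proof consists of verifying its two hypotheses --- $(\pib,\epsilon,e)$-constrainedness (the first constraint of \ref{eq:s-opt}) and $\pib$-advantageousness in $\mhat$ (the second constraint, via Proposition B.1) --- and invoking the theorem separately for each $k\in[d]$ under the union-bounded error function of Appendix B.1. Your overall architecture is right, but as written the error-propagation step would not go through; either cite Soft-SPIBB Theorem 1 as a black box, as the paper does, or reproduce its difference-of-errors bound rather than a per-policy bound.
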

The proof is presented in \Cref{app:mo-spibb-prop}.
The solution of \ref{eq:s-opt} is computed by solving the Linear Program using standard solvers, such as 
\texttt{cvxpy} \citep{diamond2016cvxpy}. 
There is an increase in the computational cost proportional to the number of reward functions. Compared to Soft-SPIBB, the value and advantage functions estimation cost increases by a factor of $d$: respectively $\mathcal{O}(d|\X|^3)$ and $\mathcal{O}(d|\A||\X|^2)$.
There is $\mathcal{O}(|\mathcal{D}|)$ cost for estimating the error bounds, and we also require solving a Linear Program for each state that approximately amounts to an additional $\mathcal{O}(|\mathcal{X}||\mathcal{A}|^2(|\mathcal{A}|+d))$ steps to the total computational cost \citep{boyd2004convex}.

\begin{remark}[Extension to Constrained-RL]
The above methodology can also be extended to the Constrained-RL setting for offline policy improvement in general CMDPs. Recall that SPIBB algorithms offer guarantees in the form of: $v_t-v_b \geq \hat{v}_t - \hat{v}_b - \xi$, where $v_t$ and $v_b$ are respectively the true values of the target and baseline policies, $\hat{v}_t$ and $\hat{v}_b$ are their estimates in the MLE MDP, and $\xi$ is an error term due to parametric uncertainty. As a consequence, any constraint $c$ such that $c\leq v_b + \hat{v}_t - \hat{v}_b - \xi$ may be guaranteed ($v_b-\hat{v}_b$ may easily be bounded with Hoeffding's inequality), and when $c$ is larger, we can return no solution found as with other Seldonian algorithms.
\end{remark}


\subsection{Multi-Objective HCPI (MO-HCPI)}
\label{sec:hcpi-w-constraints}

We briefly recall how the HCPI methodology \citep{thomas2015highImprovement, thomas2015highEvaluation} can be applied directly for solving the objective in \Cref{eq:general-task-objective}.
For a target policy, $\pi_t$, we use $\IS_{k}(\D, \pi_t, \pib)$ to denote the estimated returns for the $k$th reward component ($r_{k}$) using any IS based off-policy estimator \citep{precup2000eligibility}.  A high-confidence lower bound on $\J{\pi_t}{k}{\mopt}$ can be defined as:  
\begin{equation}
    \label{eq:hcope-R-lower-bound}
    \pr \Big( \J{\pi_t}{k}{\mopt} \ge \IS_{k}(\D, \pi_t, \pib) - \CI_{k}(\D, \delta/d) \Big) \ge 1 - \delta/d, 
\end{equation}
where $\CI_{k}(\D, \delta) \geq 0$ denotes the terms associated with the choice of concentration inequality employed (and typically $\lim_{|\D| \rightarrow \infty}\CI_{k}(\D, \delta) = 0$). 

The dataset $\D$ is first split into train ($\D_{tr}$) and test ($\D_{s}$) sets by the user. 
Let $\IS_{\bml}$ denote the IS estimator associated with the user-specified reward scalarization $\bml$. Given the user specified parameters: $\bml, \delta, \CI, \IS, \D_{tr}, \D_{s}$ and $\pib$, the policy improvement problem in \Cref{eq:general-task-objective} is transformed to the following optimization problem:
\begin{align*}
    \label{eq:h-opt}
    \pi_\textsc{h-opt} &= \argmax_{\pi \in \Pi} \IS_{\bml}(\D_{tr}, \pi, \pib) \tag{$\texttt{H-OPT}$}\\
    \text{s.t.} \quad
    &\forall k\in [d], \; \IS_{k}(\D_{s}, \pi, \pib) - \CI_k(\D_s, \delta/d) \ge
    \mu_k, 
\end{align*}
where $\mu_k$ denote the empirical returns for $r_k$ under $\pib$. 
%
The policy $\pi$ returned by \ref{eq:h-opt} will only violate the safety guarantees with probability at most $\delta$.
Proof of this claim and additional details are provided in \Cref{app:hcpi-details}.

Although we only focus on finite MDPs in this work, the HCPI based approach relies on IS estimates and therefore it can also be used for infinite MDPs or POMDPs. Unfortunately, the IS estimates are typically known to suffer from high variance \citep{guo2017using}. 
Furthermore, the optimization problem in \ref{eq:h-opt} is more challenging, and we need to resort to regularization based heuristics.


\section{Synthetic Experiments}
\label{sec:synthetic-experiments}

The main benefits of working in a synthetic domain are: (i) we can evaluate the performance on the true MDP instead of relying on off-policy evaluation (OPE) methods, (ii) we have control over the quality of the dataset. We test both 
MO-SPIBB (\ref{eq:s-opt}) and MO-HCPI (\ref{eq:h-opt}) on a variety of parameters: the amount of data, quality of baseline and different user reward scalarizations. 

\textbf{Env details:} 
We take a standard CMDP benchmark \citep{leike2017ai, chow2018lyapunov} which consists of a $10\times10$ grid. From any state, the agent can move to the adjoining cells in the 4 directions using the 4 actions. 
The transitions are stochastic, with some probability $\alpha$ (generated randomly for each state-action for every environment instance) the agent is successfully able to reach the next state, and with $(1-\alpha)$ the agent stays in the current state. 
The agent starts at the bottom-right corner, and the goal is to reach the opposite corner (top-left). The pits are spawned randomly with some uniform probability ($\eta_{pit}=0.3$) for each cell.  
The reward vector consists of two rewards signals. A primary reward $r_0$ that is related the goal and is +1000.0 on reaching the goal and -1.0 at every other time-step. The secondary reward $r_1$ is related to pits, for which the agent gets -1.0 for any action taken in the pit.
The constraint threshold for this CMDP is $-2.0$ and $\gamma = 0.99$. Maximum length of an episode is $200$ steps. Therefore, the task objective is to reach the goal in the least number of steps, such that the agent does not spend more than $2$ time-steps in the pit cells. 

\textbf{Dataset collection procedure:} 
For every random CMDP generated, we first find the optimal policy $\piopt$ by using the procedure described in \Cref{app:cmdp-solver}. The baseline policy is generated using a convex combination of the optimal policy and a uniform random policy ($\pi_{rand}$), i.e., $\pib = \rho \piopt + (1 - \rho) \pi_{rand}$, where  $\rho$ controls how close $\pib$'s performance is to $\piopt$. Different datasets with varying sizes and $\rho$ are then collected under $\pib$ and given as input to the methods.

\textbf{Baselines:} 
We compare against the following baselines:
\begin{itemize}[leftmargin=*, topsep=0pt]
    \item \myuline{Linearized}: This baseline transforms the rewards into a single scalar using $\bml$ and then applies the traditional policy improvement methods on the linearized objective, i.e,  $\argmax_{\pi \in \Pi} \J{\pi}{\bml}{\mhat}$.
    
    \item \myuline{Adv-Linearized}: This method has the same objective as the Linearized baseline, with the additional constraints based on advantage estimators built from $\mhat$, i.e. $\forall x \in \X$:
    \begin{align}
        \argmax_{\pi \in \Pi} &\langle \pi(\cdot|x) , \qval{\pi}{\bml}{\mhat}{x,\cdot} \rangle \\   
        \text{s.t.} \quad    
        &\forall k\in [d], \; \sum_{a \in \A} \pi(a|x) \adv{\pib}{k}{\mhat}{x, a} \geq 0 . \nonumber
    \end{align}

\end{itemize}

\textbf{Evaluation:} 
Using $\mopt$, we can directly calculate the returns for any solution policy. Only tracking the scalarized objective can be misleading, so we track the following metrics:
\begin{itemize}[leftmargin=*, topsep=0pt,]
    \item \myuline{Improvement over $\pib$}: This denotes the difference between the scalarized return of the solution policy and the baseline policy, i.e., $\J{\pi}{\bml}{\mopt} - \J{\pib}{\bml}{\mopt}$.
    Mean improvement over $\pib$ captures on average improvement over $\pib$ in terms of the scalarized objective.
    
    \item \myuline{Failure-rate:} 
    The failure rate over $n$ runs captures the number of times, on average, the solution policy ends up violating the safety constraints in \Cref{eq:general-safety-constraints}, and thus performs worse than the baseline. In the context of this task, safety constraints are violated if either the agent takes longer to reach the goal, or it steps into more number of pits compared to $\pib$.
\end{itemize}

We test on different combinations of user preference $(\bml)$ and baseline's quality $(\rho)$ on 100 randomly generated CMDPs, where $\lambda_i \in \{0, 1\}$, $\rho \in \{0.1, 0.4, 0.7, 0.9\}$ and 
$|D| \in \{ 10, 50, 500, 2000\}$.
We evaluate under two settings: 
(i) we use a fixed set of parameters across different $(\bml, \rho)$ combinations, where we run \ref{eq:s-opt} with $\epsilon \in \{0.01, 0.1, 1.0\}$ and \ref{eq:h-opt} with Doubly Robust IS estimator \citep{jiang2015doubly} and Student’s t-test concentration inequality; 
(ii) we treat them as hyper-parameters that can be optimized for a particular $(\bml,\rho)$ combination. The best hyper-parameters are tuned in a single environment instance and then they are used to benchmark the results on 100 random CMDPs. 

\begin{figure}[t]
\centering
\begin{subfigure}[b]{0.7\textwidth}
    \includegraphics[width=1\textwidth]{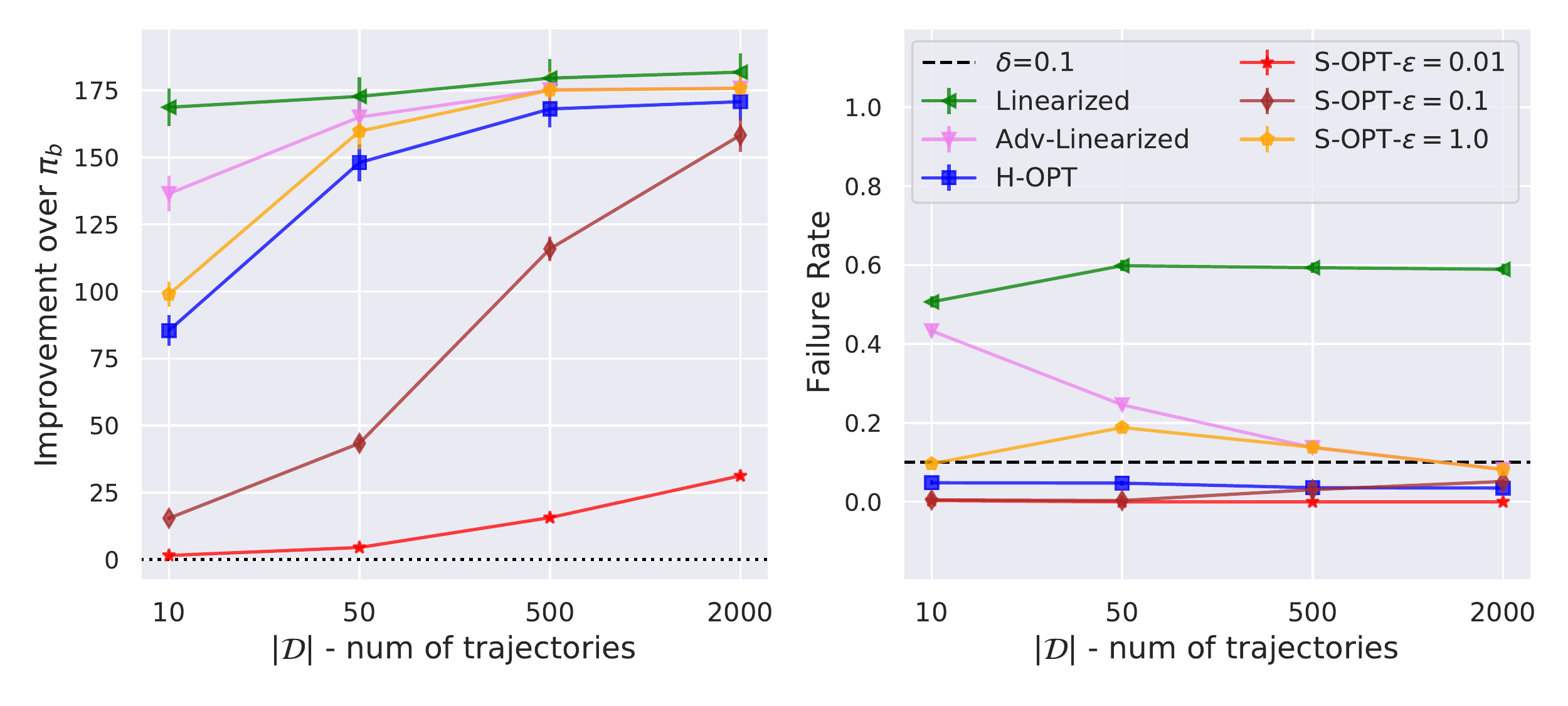}
    \caption{Fixed hyper-parameters}
    \label{fig:delta-params-mean} 
\end{subfigure}
\hfill
\begin{subfigure}[b]{0.7\textwidth}
    \includegraphics[width=1\textwidth]{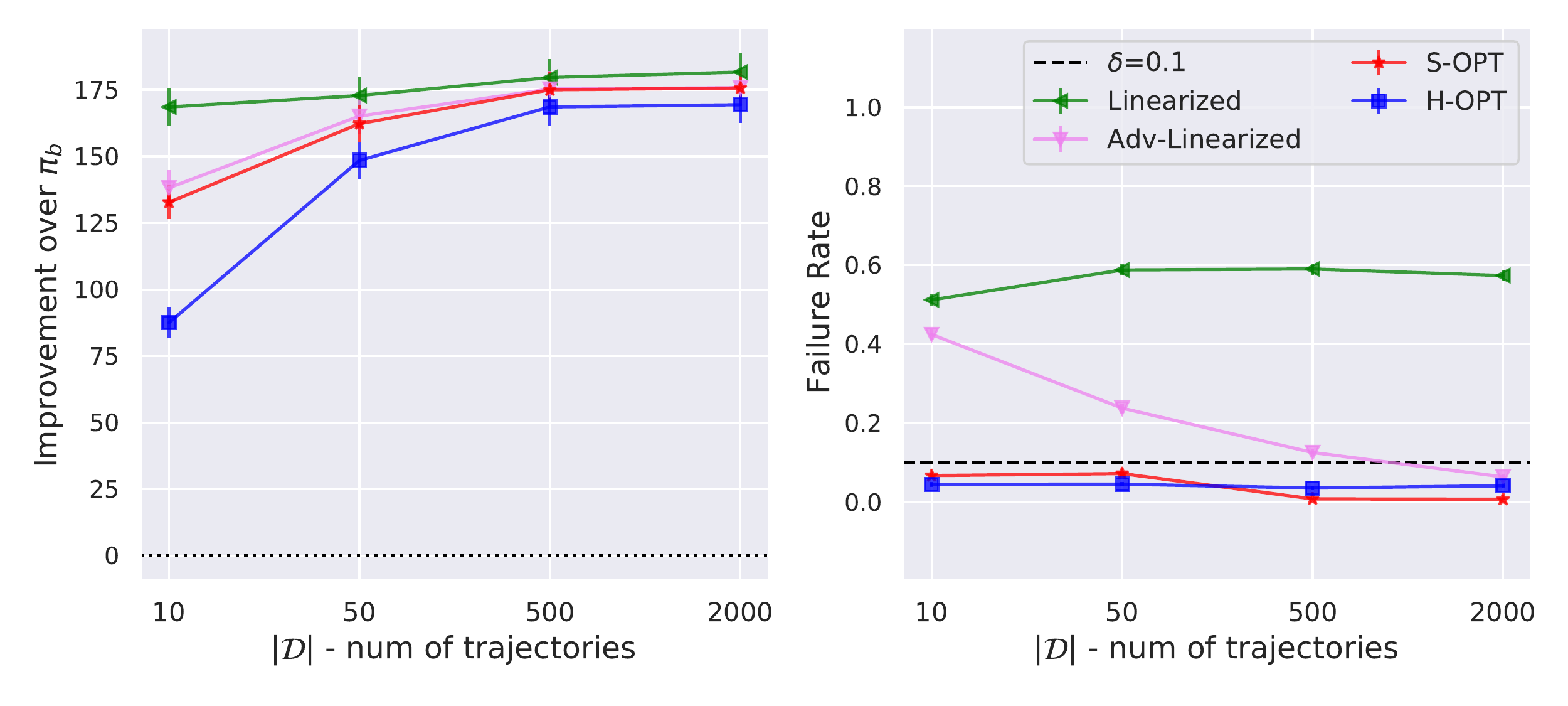}
    \caption{Optimized hyper-parameters.}
    \label{fig:best-params-mean}
\end{subfigure}
\caption[]{
\small
Results on 100 random CMDPs for different $\bml$ and $\rho$ combinations with $\delta=0.1$. The different agents are represented by different markers and colored lines. Each point on the plot denotes the mean (with standard error bars) for 12 different $\bml,\rho$ combinations for the 100 randomly generated CMDPs (1200 datapoints). 
The x-axis denotes the amount of data the agents were trained on. 
The y-axis for left subplot in each sub-figure represents the improvement over baseline and the right subplot denotes the failure rate. The dotted black line in the right subplots represents the high-confidence parameter $\delta=0.1$.
\Cref{fig:delta-params-mean} denotes when the hyper-parameters are fixed $\epsilon=\{0.01, 0.1, 1.0\}$ and $\IS=$ Doubly Robust (DR) estimator with student's t-test concentration inequality. 
\Cref{fig:best-params-mean} is the version with tuned hyper-parameters for each combination.
\label{fig:cmdp-combined-results}}
\vskip -0.1in
\end{figure}

\textbf{Results:} 
The mean results with fixed parameters and $\delta=0.1$ can be found in \Cref{fig:delta-params-mean}. 
The high failure rate of Linearized baseline, regardless of the size of the dataset, is expected as it optimizes the scalarized reward directly and is agnostic of the individual rewards. Adv-Linearized performs better, but in the low data-regime, we see a high failure rate that eventually decreases as the size of dataset increases. This is expected because with more data, more reliable advantage functions estimates are calculated that are representative of the underlying CMDP. 
Compared to the baselines, both \ref{eq:s-opt} and \ref{eq:h-opt} maintain a failure rate below the required confidence parameter $\delta$, regardless of the amount of data.
Also, as the size of dataset increases, we see an increase in improvement over $\pib$, that makes sense as the methods only deviate from baseline when they are sure of the performance guarantees. We expect \ref{eq:s-opt} to violate the constraints with increasing value of $\epsilon$, as it relaxes the constraint on the policy-class (\Cref{eq:spibb-policy-constraint}) and leads to a looser guarantee on performance. This again is reflected in our experiments where \ref{eq:s-opt} with $\epsilon=1.0$ has a higher failure-rate than $\epsilon=0.1$.
We observed similar trends for different $\delta$ values.
A more detailed plot corresponding to different $\bml$ and $\rho$ combinations as well as results for a riskier value of $\delta=0.9$ are given in \Cref{app:cmdp-fixed-param-results}.

The results with optimized hyper-parameters can be found in \Cref{fig:best-params-mean}.
We notice that when the $\epsilon$ parameter is tuned properly, \ref{eq:s-opt} has better performance in terms of improvement over $\pib$ for the same amount of samples when compared to \ref{eq:h-opt}, while still ensuring the failure rate is less than $\delta$. These observations are consistent with the results in the single-objective setting in the original SPIBB works~\citep{laroche2017safe, nadjahi2019safe}.
The general trends and observations from the fixed-parameter case are also valid here.  Additional details, including results for $\bml, \rho$ combinations, hyper-parameters considered and qualitative analysis can be found in \Cref{app:cmdp-best-param-results}. 

We also compare our methods against \cite{le2019batch} in \Cref{app:lag-baseline}. We show the advantage of our approach over \cite{le2019batch}, particularly in the low-data regime, where our methods can improve over the baseline policy while ensuring a low failure rate. 
This makes sense as the method in \cite{le2019batch} relies on the concentrability coefficient which can be arbitrarily high in the low data setting, and therefore their performance guarantees do not hold anymore.
We also provide experiments on the scalability of methods with the number of objectives $d$ in \Cref{app:cmdp-scaling-experiments}.


\section{Real-world experiment}
\label{sec:sepsis-experiments}

In order to validate the applicability of our methods on a real-world  task, 
we consider recent works on 
sepsis management via RL, where we only have access to a pre-collected patient dataset and goal is to recommend treatment strategies for patients with sepsis in the ICU \citep{komorowski2018artificial,tang2020clinician}.
Sepsis is defined as a life-threatening organ dysfunction caused by a dysregulated host response to an infection \citep{singer2016third}.
The main treatment method of sepsis involves the repeated administration of intravenous (IV) fluids and vasopressors, but how to manage their appropriate doses at the patient level is still a key clinical challenge \citep{rhodes2017surviving}. 

%

The problem is safety-critical as our methods need to be cautious about using the data that was possibly collected under unobservable confounders and that can lead to biased model estimates. 
For instance, a study by \citet{ji2020trajectory} of the model used in \citet{komorowski2018artificial} found that the learned model suggests clinically implausible behavior in the form of unexpectedly aggressive treatments.
We show that our methodology can be applied here to prevent such behavior that results from small sample sizes. We propose to do so by incorporating safety constraints to prevent recommending the treatment decisions that were never or rarely performed in the dataset. 




\textbf{Data and MDP Construction:} 
We use the publicly available ICU dataset MIMIC-III  \citep{johnson2016mimic}, with the setup described by \citet{komorowski2018artificial, tang2020clinician} and build on top of their data pre-processing and MDP construction methodology.\footnote{A caveat here is regarding the underlying assumption that the MDP construction methodology by \citet{komorowski2018artificial, tang2020clinician} maintains the Markovian property in the discretized state-space.} 
This leaves us with a cohort of 20,954 unique patients. 
The state-space consisting of 48 clinical variables summarizing features like demographics, physiological condition, laboratory values, etc., is discretized using a k-means based clustering algorithm to map the states to 750 clusters.
The actions include administration of IV fluids and vasopressors, which are categorized into 5 dosage bins each, leading to a total of $|\A|=25$. The $\gamma$ is set to $0.99$. The reward is based on patient mortality. The agent gets a reward, $r_0$, of $\pm 100$ at the end of the episode based on the survival of the patient. More details can be found in \Cref{app:sepsis-dataset}.

In the original work, the rare state-actions taken by the clinicians (state-action pairs observed infrequently in the training set) are removed from the dataset. Instead of removing them,
we define an additional reward, $r_1$, based on the rarity of the state-action pair. We define rare state-action pairs to be those that are taken less than 10 times throughout training dataset, and the agent gets a reward of $-10$ for every such rare state-action taken, i.e.,  $r_1(x,a) = -10.0 \text{ if } \texttt{Count}(x,a) < 10$.
The final task objective then becomes to suggest treatments that handles the trade-off between prioritizing improving the survival vs prioritizing commonly used treatment decisions.

\textbf{Evaluation:} 
We compare our approach with the same baselines from \Cref{sec:synthetic-experiments} on different $\bml$ combinations.  
We run our methods for 10 runs with different random seeds, where for each run the cohort dataset was split into train/valid/test sets in the ratios of 0.7/0.1/0.2.
We evaluate the performance of the solution policies returned by different methods on the test sets using two different OPE methods, Doubly Robust (DR) \citep{jiang2015dependence} and Weighted Doubly Robust (WDR) \citep{thomas2016data}.
We acknowledge that these methods are a proxy of the actual performance of the deployed policies. Hence, these results should not be misinterpreted as us claiming that the policies returned by our methods are now ready to be used in the ICU. 


\begin{table*}[ht!]
\centering
\caption{Performance of various methods using DR and WDR estimators with mean and standard deviation on 10 random splits of the cohort dataset. The red cells denote the corresponding safety constraint violation, i.e, either $\mathcal{J}_{0}^{\pi} < \mathcal{J}_{0}^{\pib}$ or $-\mathcal{J}_{1}^{\pi} > -\mathcal{J}_{1}^{\pib}$.}
\label{table:sepsis-best-results}
\begin{adjustbox}{max width=1\textwidth,center}
\begin{tabular}{cccccc}
\toprule
\multicolumn{1}{c}{User preference $(\bml)$} & \multicolumn{1}{c}{Policy} & \multicolumn{2}{c}{Survival return ($\mathcal{J}_0$)} & \multicolumn{2}{c}{Rare-treatment return ($- \mathcal{J}_1$)} \\
\hline
& & DR & WDR & DR & WDR  \\  \cline{3-6}
& Clinician's ($\pib$) & 64.78 $\pm$ 0.90 & 64.78 $\pm$ 0.90          & 13.58 $\pm$ 0.19 & 13.58 $\pm$ 0.19  \\
\midrule 
\multirow{4}{*}{$[\lambda_0=1, \lambda_1 = 0]$} 
& Linearized & 97.68 $\pm$ 0.22 & 97.58 $\pm$ 0.20   & \textcolor{red}{27.64 $\pm$ 1.11 }& \textcolor{red}{27.84 $\pm$ 1.09 } \\ 
& Adv-Linearized  & 91.62 $\pm$ 0.46 & 92.68 $\pm$ 0.23   & \textcolor{red}{15.18 $\pm$ 0.59 }& 13.56 $\pm$ 0.42 \\
& \ref{eq:s-opt}   & 66.11 $\pm$ 0.87 & 66.05 $\pm$ 0.86   & 13.42 $\pm$ 0.20 & 13.46 $\pm$ 0.20   \\
& \ref{eq:h-opt} & 65.95 $\pm$ 0.00 & 65.95 $\pm$ 0.00   & 13.37 $\pm$ 0.00 & 13.37 $\pm$ 0.00  \\
\midrule 
\multirow{4}{*}{$[\lambda_0=1, \lambda_1 = 1]$}
& Linearized & 87.17 $\pm$ 0.48 & 89.11 $\pm$ 0.37   & 2.41 $\pm$ 0.47 & 1.52 $\pm$ 0.41\\
& Adv-Linearized  & 86.77 $\pm$ 0.49 & 88.58 $\pm$ 0.25   & 2.53 $\pm$ 0.50 & 1.57 $\pm$ 0.43  \\
& \ref{eq:s-opt}  & 86.77 $\pm$ 0.49 & 88.58 $\pm$ 0.25   & 2.53 $\pm$ 0.50 & 1.57 $\pm$ 0.43   \\
& \ref{eq:h-opt} & 86.37 $\pm$ 0.00 & 88.03 $\pm$ 0.00   & 2.58 $\pm$ 0.00 & 1.43 $\pm$ 0.00  \\
\midrule 
\multirow{4}{*}{$[\lambda_0=0, \lambda_1 = 0]$}
& Linearized & \textcolor{red}{-89.39 $\pm$ 0.43} & \textcolor{red}{-90.90 $\pm$ 0.29 }  & \textcolor{red}{22.99 $\pm$ 0.40 }& \textcolor{red}{22.81 $\pm$ 0.30 }  \\ 
& Adv-Linearized  & \textcolor{red}{60.27 $\pm$ 0.49} & \textcolor{red}{61.44 $\pm$ 0.85 }  & \textcolor{red}{18.40 $\pm$ 0.27 }& \textcolor{red}{15.36 $\pm$ 0.58 }  \\
& \ref{eq:s-opt} & 67.73 $\pm$ 0.82 & 67.22 $\pm$ 0.88   & 13.24 $\pm$ 0.24 & 13.55 $\pm$ 0.33  \\
& \ref{eq:h-opt} & 65.95 $\pm$ 0.00 & 65.95 $\pm$ 0.00   & 13.37 $\pm$ 0.00 & 13.37 $\pm$ 0.00  \\
\midrule 
\multirow{4}{*}{$[\lambda_0=0, \lambda_1 = 1]$}
& Linearized & \textcolor{red}{58.27 $\pm$ 2.18} & \textcolor{red}{60.52 $\pm$ 2.07 }  & 0.04 $\pm$ 0.03 & 0.02 $\pm$ 0.01  \\ 
& Adv-Linearized  & 76.05 $\pm$ 0.65 & 76.85 $\pm$ 0.72   & 0.07 $\pm$ 0.05 & 0.04 $\pm$ 0.03  \\
& \ref{eq:s-opt}  & 76.07 $\pm$ 0.65 & 76.87 $\pm$ 0.73   & 0.07 $\pm$ 0.05 & 0.04 $\pm$ 0.03  \\
& \ref{eq:h-opt} & 76.54 $\pm$ 0.00 & 77.55 $\pm$ 0.00   & 0.09 $\pm$ 0.00 & 0.05 $\pm$ 0.00  \\
\bottomrule 
\end{tabular}
\end{adjustbox}
\vskip -0.1in
\end{table*}

\textbf{Results:}
We refer to the return associated with the mortality reward ($r_0$) as survival return ($\mathcal{J}_{0}$), and the negative return associated with rare state-action reward ($r_1$) as rare-treatment return ($- \mathcal{J}_{1}$). Higher survival return implies more successful discharges, and lower rare-treatment return implies more adherence to common practice treatment decisions.
We present the results on survival and rare-treatment returns 
in \Cref{table:sepsis-best-results}. 
As expected, we observe both the Linearized and Adv-Linearized baselines violates constraints across different $\bml$, whereas \ref{eq:s-opt} and \ref{eq:h-opt} are able to respect the safety constraints irrespective of the $\bml$.\footnote{In \Cref{table:sepsis-best-results}, $\bml =[1,1]$ represents a rare case of reward scalarization that allows all the methods to find a good solution policy that satisfies the constraints.  In general, it is difficult to find such scalarization parameters as seen in synthetic experiments (\Cref{app:cmdp-fixed-param-results}).}
The validation set was used to tune the hyper-parameters, and we report how the performance varies with different hyper-parameters in \Cref{app:sepsis-hyperparams}.

\textbf{Qualitative Analysis:} 
We conclude with a qualitative analysis of the policies returned from our setting and the traditional RL approach of maximizing just the survival return. 
%
\citet{ji2020trajectory} found that the RL-policies for sepsis-management task usually end up recommending aggressive treatments, particularly high vasopressor doses for states where the common practice 
(according to most frequent action chosen by the clinician for that state) 
is to give no vasopressors at all. The common practice involves giving zero vasopressors for 722 of the 750 states. However, the policy returned by the traditional single-objective RL baseline recommends vasopressors in 562 (77.84\%) of those 722 states, with 295 of those recommendations being large doses
(upper 50th percentile of nonzero amounts  or $>0.2$ $\mu$g/kg/min). 
We compare these statistics for two of the policies returned by MO-SPIBB that deviate the most from $\pib$. 
The policy returned by \ref{eq:s-opt} ($\bml=[1,1]$) recommends vasopressors in only 93 of those states (12.88 \%), with 47 of those recommendations belonging to high dosages. The other policy, \ref{eq:s-opt} ($\bml=[0,1]$), recommends vasopressors in 134 (18.56 \%) of those states and 70 of those recommendations fall in large dosages.
Therefore, the policies returned by our approach, even when they deviate from the baseline, are less aggressive in recommending rare treatments. 
In \Cref{app:sepsis-qual-analysis}, we present an additional qualitative analysis that demonstrates our methods recommend lesser rare-action treatments than the traditional single-objective RL approach.

An argument can be made against the case when all rare state-action pairs are removed from the training data itself. This will ensure that any learned policy will have near 0 rare-treatment return. However, it is not always clear how to define the cut-off criteria for rare-actions, and it might be possible that some of these rare state-action pairs are actually crucial for finding a better policy. 
For instance, we did an experiment where we assigned state-actions pairs with frequency $<100$ to be rare state-action pairs and filtered those from the training set. The clinician's performance on the test set using a DR estimator for survival return is 65.95. 
In this case, the traditional single-objective RL baseline gives the survival return of 11.26, 
which shows that removing such transitions from the dataset actually hampers the solution quality. Our approach of assigning a separate reward for rare state-action pairs is able to find a solution with a survival return of 86.75 
even in this scenario.

\section{Conclusion}
\label{sec:conclusion}

We present a new Seldonian RL algorithm that takes the user preference based scalarization into account while ensuring the solution policy performs reliably in context to the baseline policy across all objectives. 
On both synthetic and real-world tasks, we show that the proposed approach can improve the policy while ensuring the safety constraints are respected. 

Our setting can accommodate any general form of scalarizations (e.g. non-linear or convex) as well as objectives (such as fairness), making it applicable to a wide variety of real-world tasks. 
The only assumption we made is regarding the dataset being collected under a single known baseline policy. An exciting line of future work can be to relax this assumption and consider the scenario where the dataset comes from a variety of unknown policies with different qualities. 
We did not make any claims about the optimality of the solutions as often optimality and safety are contradicting objectives. It is not clear how (and if) one can make claims about optimality in the offline setting without bringing in additional unrealistic assumptions (\Cref{sec:related-work}). 
The extension to infinite MDPs and the function-approximation setting is also left for future work. 
It is important to note that when it comes to practical application, it is not unusual for continuous domains to be discretized to enable better interpretability, especially when interactions with humans are necessary. If the Markovian property is valid in the discretized space, SPIBB-based guarantees will also hold true.

\section{Acknowledgements}

The authors would like to thank NSERC (Natural Sciences and Engineering Research Council), IVADO (Institut de valorisation des données) and CIFAR (Canadian Institute for Advanced Research) for funding to McGill in support of this research. Philip S. Thomas was funded in part by NSF award \#2018372. 
The computational component of this research was enabled in part by support provided by Calcul Québec (\url{www.calculquebec.ca}), Compute Canada (\url{www.computecanada.ca}) and Mila's IDT team. 

We would also like to thank Emmanuel Bengio and Koustuv Sinha for many helpful
discussions about the work, and the anonymous reviewers for providing constructive feedback.


\bibliographystyle{apalike}
\bibliography{doc/lib}

\section*{Checklist}

\begin{enumerate}

\item For all authors...
\begin{enumerate}
  \item Do the main claims made in the abstract and introduction accurately reflect the paper's contributions and scope?
    \answerYes{See \Cref{sec:spibb-w-constraints,sec:hcpi-w-constraints} for the methodology and theoretical claims, and \Cref{sec:synthetic-experiments,sec:sepsis-experiments} for the empirical results.}
  
  \item Did you describe the limitations of your work?
    \answerYes{See \Cref{sec:introduction} for the limitations and scope of this work.}
  
  \item Did you discuss any potential negative societal impacts of your work?
    \answerYes{
    As we mentioned in \Cref{sec:introduction,sec:problem-formulation}, our goal is to maximize the objective specified by the user while ensuring that the solution policy avoids causing harmful effects after deployment in the true environment in comparison to the existing baseline policy. 
    We aim to bridge the gap between traditional RL methods and high-stake real-world applications, but, as with any general technology, we acknowledge that some RL applications can have the potential of misuse, and our methods do not prevent that.
    }

  \item Have you read the ethics review guidelines and ensured that your paper conforms to them?
    \answerYes{We discuss the societal impacts in the point above. We use pre-existing publicly available data and libraries and give more details about them in the Point 4 below.}
\end{enumerate}

\item If you are including theoretical results...
\begin{enumerate}
  \item Did you state the full set of assumptions of all theoretical results?
    \answerYes{See \Cref{sec:setting} for the assumption regarding access to baseline policy.}
	\item Did you include complete proofs of all theoretical results?
    \answerYes{The complete proofs are provided in the  \Cref{app:spibb-additional-details} and \Cref{app:hcpi-details}.}
\end{enumerate}

\item If you ran experiments...
\begin{enumerate}
  \item Did you include the code, data, and instructions needed to reproduce the main experimental results (either in the supplemental material or as a URL)?
    \answerYes{The code required to produce the results is provided in the supplementary material.}
    
  \item Did you specify all the training details (e.g., data splits, hyperparameters, how they were chosen)?
    \answerYes{The training details are mentioned in  \Cref{sec:synthetic-experiments,sec:sepsis-experiments} in the main text, and  \Cref{app:additional-details-for-synthetic-exp,app:sepsis-details}.}
    
	\item Did you report error bars (e.g., with respect to the random seed after running experiments multiple times)?
    \answerYes{The details about the error bars are provided in \Cref{sec:synthetic-experiments,sec:sepsis-experiments}.}
    
	\item Did you include the total amount of compute and the type of resources used (e.g., type of GPUs, internal cluster, or cloud provider)?
    \answerYes{Details about the compute and resources can be found in \cref{app:additional-details-for-synthetic-exp,app:sepsis-details}.
    }
\end{enumerate}

\item If you are using existing assets (e.g., code, data, models) or curating/releasing new assets...
\begin{enumerate}
  \item If your work uses existing assets, did you cite the creators?
    \answerYes{See \Cref{sec:synthetic-experiments,sec:sepsis-experiments} for the appropriate references.}
  
  \item Did you mention the license of the assets?
    \answerYes{We use the \citep[MIMIC-III,][]{johnson2016mimic} dataset and provide the appropriate reference. The explicit link to the license is here: \url{https://physionet.org/content/mimiciii/view-license/1.4/}. }
  
  \item Did you include any new assets either in the supplemental material or as a URL?
    \answerYes{We provide the accompanying code for running the experiments in the supplemental material.}
    
  \item Did you discuss whether and how consent was obtained from people whose data you're using/curating?
    \answerNA{We are using publicly available libraries and dataset.}
    
  \item Did you discuss whether the data you are using/curating contains personally identifiable information or offensive content?
    \answerNA{We are using the publicly available dataset  \citep[MIMIC-III,][]{johnson2016mimic} that already deidentifies the data in accordance with Health Insurance Portability and Accountability Act (HIPAA) standards using structured data cleansing and date shifting. We refer to the original text for more details on the deidentification process.
    }
\end{enumerate}

\item If you used crowdsourcing or conducted research with human subjects...
\begin{enumerate}
  \item Did you include the full text of instructions given to participants and screenshots, if applicable?
    \answerNA{}
  \item Did you describe any potential participant risks, with links to Institutional Review Board (IRB) approvals, if applicable?
    \answerNA{}
  \item Did you include the estimated hourly wage paid to participants and the total amount spent on participant compensation?
    \answerNA{}
\end{enumerate}

\end{enumerate}



\clearpage
\onecolumn
\appendix

\section{Scalarized safety constraints}
\label{app:naive-construction}

Instead of having constraints of the form in \Cref{eq:general-safety-constraints}, it is possible to define the constraints in terms of the scalarized objective directly, i.e., 
\begin{align*}
    \mathbb{P}\left(    \sum_{k\in[d]} \lambda_i \J{\pi}{i}{\mstar}  - \sum_{k\in[d]} \lambda_i \J{\pib}{i}{\mstar} > - \zeta \Big| \D \right) > 1 -\delta.
\end{align*}

Without loss of generality, if we assume there are only two objective ($d=2$), then satisfying the above constraint implies:
\begin{align*}
    \lambda_0 \left(\J{\pi}{0}{\mstar} - \J{\pib}{0}{\mstar} \right) + \lambda_1 \left(\J{\pi}{1}{\mstar} - \J{\pib}{1}{\mstar}  \right) \geq 0 .
\end{align*}
Consider a scenario where the solution policy performs poorly w.r.t. the second objective, i.e, $\lambda_1 (\J{\pi}{1}{\mstar}  - \J{\pib}{1}{\mstar}) < 0$, however the the improvement in the first objective is very large $\lambda_0 (\J{\pi}{0}{\mstar} - \J{\pib}{0}{\mstar}) >> 0$. In this case, even though the linearized cumulative constraint regarding the performance improvement is being satisfied, it fails to guarantee the improvement across each individual objectives.






\section{SPIBB - Additional details}
\label{app:spibb-additional-details}  
\subsection{Concentration Bounds} 
\label{app:error_bounds}

The difference between an estimated parameter and the true one can be bounded using concentration bounds (or equivalently, Hoeffding's inequality) applied to the state-action counts $n_{\mathcal{D}}(x,a)$ in dataset $\mathcal{D}$~\citep{petrik2016safe, laroche2017safe}. Specifically, the following inequalities hold with probability at least $1-\delta = 1 - \delta' - \delta''$ for any state-action pair $(x,a) \in \mathcal{X} \times \mathcal{A}$:
\begin{align}
	\lVert p^\star(\cdot|x,a)-\hat{p}(\cdot|x,a)\rVert_1 &\leq e_p(x,a),\\
	\forall k\in[d], \lvert r^\star_k(x,a)-\hat{r}_k(x,a)\rvert &\leq e_r(x,a)r_{\mytop} \label{eq:bound-Q-2}
\end{align}
where: 
\begin{align}
    e_p(x,a) &:= \sqrt{\cfrac{2}{n_{\mathcal{D}}(x,a)}\log\cfrac{2|\mathcal{X}||\mathcal{A}|2^{|\mathcal{X}|}}{\delta'}} \label{eq:error-function-P-2} \\
    e_r(x,a) &:= \sqrt{\cfrac{2}{n_{\mathcal{D}}(x,a)}\log\cfrac{2|\mathcal{X}||\mathcal{A}|d}{\delta''}}. \label{eq:error-function-Q-2}
\end{align}

    The two inequalities can be proved similarly to \citep[Proposition 9]{petrik2016safe}. We only detail the proof for \eqref{eq:bound-Q-2}: for any $(x,a) \in \mathcal{X} \times \mathcal{A}$, and from the two-sided Hoeffding's inequality, 
    \begin{align*}
        &\mathbb{P} \left( \forall (x,a), \big\lvert r^\star_k(x,a) -\hat{r}_k(x,a) \big\rvert > e_r(x,a)r_{\mytop} \right) 
        \\
        &\qquad\qquad = \mathbb{P} \left( \forall (x,a), \frac{\big\lvert r^\star_k(x,a)-\hat{r}_k(x,a) \big\rvert}{2 V_{max}} > \sqrt{\frac{1}{2 n_\mathcal{D}(x,a)} \log \frac{2 |\mathcal{X}||\mathcal{A}|d}{\delta''}} \right) \\
        &\qquad\qquad \leq 2 \exp \left( -2 n_\mathcal{D}(x,a) \frac{1}{2 n_\mathcal{D}(x,a)} \log \frac{2 |\mathcal{X}||\mathcal{A}|}{\delta''}  \right) \\
        &\qquad\qquad \leq \frac{\delta''}{| \mathcal{X} | | \mathcal{A} | d}
    \end{align*}
    
    By summing all $|\mathcal{X}| |\mathcal{A}|d $ state-action-reward tuples error probabilities lower than $\frac{\delta''}{| \mathcal{X} | | \mathcal{A} |d }$, we obtain \eqref{eq:bound-Q-2}. If we choose $e(x,a)=e_p(x,a)=e_r(x,a)$, we get that:
    \begin{align}
        \cfrac{2}{n_{\mathcal{D}}(x,a)}\log\cfrac{2|\mathcal{X}||\mathcal{A}|2^{|\mathcal{X}|}}{\delta'} &= \cfrac{2}{n_{\mathcal{D}}(x,a)}\log\cfrac{2|\mathcal{X}||\mathcal{A}|d}{\delta''} \\
        \cfrac{2^{|\mathcal{X}|}}{\delta'} &= \cfrac{d}{\delta''} \\
        \delta'' &= d\delta' 2^{-|\mathcal{X}|}
    \end{align}
    
    It means that $\delta = \delta' + \delta'' = \delta'(1+d2^{-|\mathcal{X}|})$. The cost in terms of approximation is therefore linear with a very small slope, inside square root of log, which means that it will basically have an insignificant impact on the concentration bound.

\subsection{Need of advantageous constraints}
\label{app:spibb-need-of-advatangeous}

\begin{prop}
\label{prop:mo-sipbb-advantageous}
The advantageous constraints in \ref{eq:s-opt} ensure that performance constraints w.r.t. the individual returns are respected in $\mhat$, i.e., $ \forall k \in [d],\; \J{\pi}{k}{\mhat} - \J{\pib}{k}{\mhat} \ge 0$.
\end{prop}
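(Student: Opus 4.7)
The plan is to apply the classical Performance Difference Lemma (in the style of Kakade \& Langford, 2002) separately to each reward coordinate $k \in [d]$, using the MLE MDP $\mhat$ as the evaluation environment. Fix any $k$ and let $d^{\pi}_{\mhat}$ denote the (discounted) state-occupancy measure induced by $\pi$ in $\mhat$ starting from $x_0$ with discount $\gamma_k$. The Performance Difference Lemma, applied coordinate-wise, gives
\begin{equation*}
\J{\pi}{k}{\mhat} - \J{\pib}{k}{\mhat} \;=\; \frac{1}{1-\gamma_k}\, \E_{x \sim d^{\pi}_{\mhat}}\!\left[\, \sum_{a \in \A} \pi(a \mid x)\, \adv{\pib}{k}{\mhat}{x,a} \,\right].
\end{equation*}

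The second step is to invoke the per-state advantageous constraint from \ref{eq:s-opt}, namely that for every $x \in \X$, $\sum_{a} \pi(a\mid x)\, \adv{\pib}{k}{\mhat}{x,a} \geq 0$. Since $d^{\pi}_{\mhat}$ is a non-negative measure and the integrand is non-negative pointwise, the expectation is non-negative, so $\J{\pi}{k}{\mhat} - \J{\pib}{k}{\mhat} \geq 0$. Repeating the argument for each $k \in [d]$ yields the claim.

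For completeness I would also sketch why the Performance Difference Lemma holds in this multi-reward setting: it is a direct telescoping identity that does not couple the reward coordinates, so the standard single-reward proof (adding and subtracting $\val{\pib}{k}{\mhat}{x_t}$ along a trajectory of $\pi$ in $\mhat$, using the Bellman equation for $\pib$, and summing a geometric series in $\gamma_k$) applies verbatim once we fix coordinate $k$. This justifies the identity above with no new assumptions.

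\textbf{Main obstacle.} There is no serious obstacle; the result is essentially an application of the policy improvement theorem with a separate instantiation per objective. The only mildly delicate point is ensuring the per-state constraint is the right hypothesis: it is in fact stronger than needed, since only the $d^{\pi}_{\mhat}$-weighted average of the advantage has to be non-negative. The per-state formulation is what the linear program in \ref{eq:s-opt} actually enforces (it is solved independently at each $x$), so this is exactly compatible with the hypothesis of the proposition. Note also that the result is about $\mhat$, not $\mstar$, so no concentration bounds enter here; the transfer to $\mstar$ is handled separately by the SPIBB guarantees discussed in Section~\ref{sec:spibb-w-constraints}.
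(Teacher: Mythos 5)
Your proposal is correct and follows essentially the same route as the paper's own proof: both apply the Performance Difference Lemma coordinate-wise in $\mhat$ and conclude from the non-negativity of the discounted occupancy measure together with the per-state advantage constraint enforced by \ref{eq:s-opt}. Your handling of the $\frac{1}{1-\gamma_k}$ normalization is, if anything, slightly more careful than the paper's statement, but the argument is the same.
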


\begin{proof}
For the $k$\textsuperscript{th} reward function, we can estimate the advantage function in an MDP $m$ as:
\begin{align*}
    \adv{\pib}{k}{m}{x,a} = \qval{\pib}{k}{m}{x,a} - \val{\pib}{k}{m}{x}
\end{align*}
Similarly, let $\rho^{\pi}_{m}(x)$ denote the normalized discounted future state distribution:
\begin{align*}
    \rho^{\pi}_{m}(x) &= (1-\gamma)\sum_{t=0}^{\infty} \gamma^t \mathbb{P}(X_t=x | \pi, X_0=x_0),
\end{align*}
where $X_{t} \sim p(\cdot | X_{t-1}, A_{t-1}), A_{t-1} \sim \pi(\cdot|X_{t-1})$.
From Performance Difference Lemma \citep{kakade2002approximately}, we have the following result:
\begin{align}
    \label{eq:spibb-prop-adv}
    \J{\pi}{k}{\mhat} - \J{\pib}{k}{\mhat} &= \sum_{x \in \X} \rho^{\pi}_{\mhat}(x) \underbrace{\sum_{a \in \A} \pi(a|x) \adv{\pib}{k}{\mhat}{x,a}}_{\text{advantage constraint}}
\end{align}

The first term in the above equation $\rho^{\pi}_{\mhat}(x) \ge 0$ for any $x \in \X$. The second term is the advantage constraint in the construction of \ref{eq:s-opt}.
Therefore, any solution of \ref{eq:s-opt} satisfies $\sum_{a \in \A} \pi(a|x) \adv{\pib}{k}{\mhat}{x,a} \ge 0, \forall x \in \X$.

As both the terms in \Cref{eq:spibb-prop-adv} are $\ge 0 \; \forall x \in \X$, this implies $ \J{\pi}{k}{\mhat} - \J{\pib}{k}{\mhat} \ge 0$.

\end{proof}

\subsection{MO-SPIBB Results}
\label{app:mo-spibb-prop}

Using the results from \Cref{app:error_bounds} and \Cref{app:spibb-need-of-advatangeous}, we can directly apply the Soft-SPIBB theorems to individual objectives in \ref{eq:s-opt}. For instance, we get the following result about 1-step policy improvement guarantees directly from Theorem 1 of Soft-SPIBB:

\begin{prop}
The policy $\pi$ returned from solving the \ref{eq:s-opt} satisfies the following property in every state $x$ with probability at least $(1 - \delta)$:
\begin{align}
    \forall k \in [d], \val{\pi}{k}{\mopt}{x} - \val{\pib}{k}{\mopt}{x} \geq -\frac{\epsilon v_{\text{max}}}{1-\gamma},
\end{align}
where $v_{\text{max}} \le \frac{\rmax}{1-\gamma}$ is the maximum of the value function.
\end{prop}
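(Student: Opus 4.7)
The plan is to reduce the proposition to a per-objective application of the single-objective Soft-SPIBB 1-step improvement theorem (Theorem~1 of \citet{nadjahi2019safe}), by verifying that the policy $\pi$ returned by \ref{eq:s-opt} satisfies the two hypotheses of that theorem for every reward $k\in[d]$ simultaneously with probability at least $1-\delta$. Those hypotheses are: (i) $\pi$ belongs to the Soft-SPIBB policy class $\Pi_\textsc{s}$ defined by the error function $e$; and (ii) $\pi$ is $\pi_b$-advantageous in $\hat{m}$ with respect to the $k$th reward, in the sense that $\sum_a \pi(a\mid x)\,\adv{\pi_b}{k}{\hat m}{x,a}\ge 0$ for every state $x$.

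First I would observe that hypothesis (i) is satisfied by construction: the constraint $\sum_a e(x,a)\,|\pi(a\mid x)-\pi_b(a\mid x)|\le\epsilon$ appears explicitly as the $\pi\in\Pi_\textsc{s}$ constraint in \ref{eq:s-opt}. Hypothesis (ii) is also explicitly encoded in \ref{eq:s-opt} as the per-objective advantage constraint, so it holds for every $k\in[d]$ and every $x\in\X$. This means the feasibility of $\pi$ for \ref{eq:s-opt} already packages all the structural conditions needed to invoke Soft-SPIBB on each individual reward function.

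Next I would handle the high-probability event. From the multi-objective concentration bounds derived in Appendix~\ref{app:error_bounds}, the error function $e$ dominates the true model error on both transitions and every one of the $d$ reward channels simultaneously with probability at least $1-\delta$; the union bound over the $d$ rewards is already absorbed into the logarithmic factor of $e(x,a)$, so no extra factor of $d$ appears in the final bound. Conditioned on this event, $m^\star$ lies in the plausible set $\Xi$ associated with $e$ and $\hat m$.

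Finally, on this event I would apply Theorem~1 of Soft-SPIBB separately to each $k\in[d]$. Since the theorem's assumptions are verified per objective by the previous two steps, it yields the pointwise guarantee
\begin{equation*}
\val{\pi}{k}{\mstar}{x}-\val{\pi_b}{k}{\mstar}{x}\ge -\frac{\epsilon\,v_{\text{max}}}{1-\gamma}
\end{equation*}
for every $x\in\X$ and every $k\in[d]$ simultaneously, with probability at least $1-\delta$, which is exactly the claim. The main obstacle, and what I would check most carefully, is that the advantage constraints in \ref{eq:s-opt} really do provide the per-objective policy-improvement hypothesis that Soft-SPIBB requires (this is the content of Proposition~\ref{prop:mo-sipbb-advantageous}, which replaces the automatic guarantee that single-objective SPIBB gets from its $\argmax$); once that is in place, the rest is a bookkeeping exercise in applying an existing theorem reward-by-reward under a single union-bounded concentration event.
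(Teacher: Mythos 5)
Your proposal is correct and follows essentially the same route as the paper's own proof: verify that the \ref{eq:s-opt} constraints give, by construction, both the $(\pib,\epsilon,e)$-constrained property and the per-objective $\pib$-advantageous property in $\mhat$ (the content of Proposition~\ref{prop:mo-sipbb-advantageous}), combine with the multi-objective concentration bounds of Appendix~\ref{app:error_bounds}, and then invoke Theorem~1 of Soft-SPIBB reward-by-reward. Your additional remarks on the union bound being absorbed into $e(x,a)$ only make explicit what the paper leaves implicit.
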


\begin{proof}
We will show the policy returned by \ref{eq:s-opt} satisfies both the properties required for applying the Theorem 1 of Soft-SPIBB:

\begin{itemize}
    \item $\pi$ is $(\pib, \epsilon, e)$-constrained: This is equivalent to $\sum_{a \in \A} e(x,a)\ |\pi(a|x) - \pib(a|x)| \leq \epsilon$, that is true by construction.
    
    \item $\pib$-advantageous in $\mhat$: For $k$\textsuperscript{th} reward function, this is equivalent to $\J{\pi}{k}{\mhat} - \J{\pib}{k}{\mhat} \ge 0$, which is also true from construction.
\end{itemize}
From there, the exact statement of Theorem 1 can be applied directly to get the above result.
\end{proof}

\section{HCPI - Additional details}
\label{app:hcpi-details}

\paragraph{Concentration Inequalities:} We experimented with the following concentration inequalities \citep{thomas2015highImprovement}:
\begin{itemize}[leftmargin=*]
    \item Extension of Empirical Bernstein \citep{maurer2009empirical}: This is the extension of
    Maurer \& Pontil's empirical Bernstein (MPeB) inequality. From Theorem 1 of \cite{thomas2015highEvaluation}: Let $X_1, \dots, X_n$ denote $n$ independent real-valued random variables, such that for each $i \in \{1,\dots,n\}$, we have $\pr(0 \le X_i) = 1,  \E[X_i] \le \mu$, and some fixed real-valued threshold $c_i > 0$. Let $\delta > 0$ and $Y_i = \min\{X_i, c_i\}$, then with probability at least $(1-\delta)$:
    \begin{align}
        \mu &\ge \sum_{i=1}^{n}\left(\frac{1}{c_i}\right)^{-1} \sum_{i=1}^{n} \frac{Y_i}{c_i} - \sum_{i=1}^{n}\left(\frac{1}{c_i}\right)^{-1} \frac{7n \ln(2/\delta)}{3n-1} - 
        \sum_{i=1}^{n}\left(\frac{1}{c_i}\right)^{-1} \sqrt{\frac{\ln(2/\delta)}{n-1} \sum_{i,j=1}^{n}\left(\frac{Y_i}{c_i} - \frac{Y_j}{c_j}\right)^2}.
    \end{align}
    In context of this paper, for the $k$\textsuperscript{th} reward function, $X_i$ denotes the $\IS$ estimated return for that trajectory, i.e., $\IS_k(\tau_i,\pi_t, \pib)$. Here, $c_i$ is a hyper-parameter that needs to be tuned. In \cite{thomas2015highImprovement}, a fixed value of $c$ is used for all $c_i$. 
    
    \item Student's t-test \citep{walpole1993probability}: This is an approximate concentration inequality that is based on the assumption that the mean returns are distributed normally. For $k$\textsuperscript{th} reward, the \Cref{eq:hcope-R-lower-bound} can be written as:
    \begin{align}
    \pr \Big( \J{\pi_t}{k}{\mopt} \ge \IS_{k}(\D, \pi_t, \pib) - \frac{\hat{\sigma}_k}{\sqrt{|\D|}}t_{1-\delta/d, |\D|-1} \Big) \ge 1 - \delta/d, 
    \end{align}
    where $\hat{\sigma}_k$ is the sample standard deviation:
    \begin{align}
    \hat{\sigma}_k &= \sqrt{\frac{1}{|\D|-1} \sum_{i=1}^{|\D|}(\IS(\tau_i, \pi_t, \pib) - \overline{\IS} )^2 },
    \end{align}
    and $\overline{\IS} = \frac{1}{|\D|}\sum_{i=1}^{|\D|} \IS(\tau_i,\pi_t,\pib)$ and $t_{1-\delta/d, |\D|-1}$ is the $100(1-\delta/d)$ percentile of the student t-distribution with $|\D|-1$ degrees of freedom. 
    
\end{itemize}

We experimented with both MPeB Extension (with $c=0.5$) and Student's t-test inequalities and found that the solutions returned by the former to be very conservative. Therefore, we use t-test in all of our experiments. Even though the t-test's assumption (normally distributed returns) is technically false, it's a reasonable assumption due to central limit theorem. 
The consequence is that the failure rate (the chance of deploying an unsafe policy) can, in theory, be higher than desired, though, in practice, that's unlikely.

\paragraph{Regularization:} 
For small problems, \ref{eq:h-opt} can be solved with methods like CMA-ES \citep{hansen2006cma}. 
%
For stochastic policies, 
as the optimization problem in \ref{eq:h-opt} is difficult to solve directly, we need to resort to a regularization based heuristic \citep{thomas2015highImprovement, laroche2017safe}. Let $\pi_t$ denote the solution policy found using $\D_{tr}$ using any of the traditional offline RL methods. A set of candidate policies is built using the baseline policy: $\pi_{\text{Cand}} = \{ (1-\alpha)\pi_t + \alpha \pi_b\}$, where $\alpha \in \set{0.0, 0.1, \dots, 0.9}$ is the regularization hyper-parameter. 
The best performing candidate policy that satisfies the safety-test (the performance constraints based on $\D_s$) is then returned.
If none of the candidate policies satisfy the safety-test, the baseline policy is returned.

For finding $\pi_t$, we experimented with both the Linearized and Adv-Linearized baselines in \Cref{sec:synthetic-experiments} and found that Adv-Linearized worked better (higher improvement over $\pib$ while failure rate $<\delta$). Therefore in our experiments, we first find $\pi_t$ using Adv-Linearized and then regularize it using $\pib$ to build the set of candidate policies $\pi_{\text{Cand}}$.

\paragraph{Safety-guarantees:}
We get the safety guarantees related to \ref{eq:h-opt} directly from \cite{thomas2015highImprovement, Thomas2019}.  The constraints of \ref{eq:h-opt} define the new safety-test that ensures a candidate policy will only be returned if the individual performance guarantees corresponding to each reward function are satisfied. This procedure will only make error in the scenario where the performance constraint related to $k$\textsuperscript{th} is satisfied, i.e, $( \IS_{k}(\D_{s}, \pi, \pib) - \CI_k(\D_s, \delta/d) \geq \mu_k)$,  but in practice the policy is not good enough $(\J{\pi}{k}{\mopt} < \mu_k)$. By transitivity this implies $\J{\pi}{k}{\mopt} < \big( \IS_{k}(\D_{s}, \pi, \pib) - \CI_k(\D_s, \delta/d) \big)$, which from \Cref{eq:hcope-R-lower-bound} we know can only occur with probability at most $\delta/d$. Using the union bound, we know that cumulative probability of the union of any of these $d$ possible scenarios is $\leq \delta$.

\paragraph{Computational cost:} Compared to regular HCPI, there is an increase in computational cost proportional to the number of reward functions $d$. The value and advantage functions estimation cost increases by a factor of $d$: respectively $\mathcal{O}(d|\X|^3)$ and $\mathcal{O}(d|\A||\X|^2)$, the $\IS$ estimation also increases by factor of $d$, and the computational cost for safety-test also increases by $d$: $\mathcal{O}(d|\D|)$.

\section{Additional details for synthetic CMDP experiments}
\label{app:additional-details-for-synthetic-exp}

\subsection{Solving CMDP}
\label{app:cmdp-solver}

Constrained-MDPs~\citep{altman1999constrained} are MDPs with multiple rewards where $r_0$ is the main objective, and $r_1, \dots, r_{n-1}$ are the reward signals that are used to enforce some behavior or constraints. 

Let $\J{\pi}{i}{m}(\mu)$ denote the total expected discount reward under $r_i$ in an MDP $m$, when $\pi$ is followed from an initial state chosen at random from $\mu$, the initial state distribution. 
For some given reals $c_1, \dots, c_n$ (each corresponding to $r_i$), the CMDP optimization problem is to find the policy that maximizes the $\J{\pi}{0}{m}(\mu)$ subject to the constraints $\J{\pi}{i}{\mopt}(\mu) \le c_i$ :
\begin{align}
    \label{eq:cmdp-obj}
    &\max_\pi \J{\pi}{0}{m}(\mu) \\ 
        \quad \text{ s.t. } & \J{\pi}{i}{m}(\mu) \le c_i, \, \forall i \in \{1,\dots,n-1\}. \nonumber
\end{align}
    

The Dual LP based algorithm for solving CMDP is based on the occupation measure w.r.t. the optimal policy $\piopt$. For any policy $\pi$ and initial state $x_0 \sim \mu(\cdot)$, the occupancy measure is described as:
\begin{align*}
    \rho^{\pi}(x,a) &= \E \left[ \sum_{t=0}^{\infty} \gamma^t \mathbbm{1}\{x_t=x, a_t=a\} \Big| x_0, \pi \right], \forall x \in \X, \forall a \in \A. 
\end{align*}
The occupation measure at any state $x \in X$ is defined as $\sum_{a} \rho^{\pi}(x,a)$.  From \citep[Chapter 9]{altman1999constrained}, the problem of finding the optimal policy for a CMDP can be solved by the solving the following LP problem: 
\begin{align*}
    \max_{\rho}  &\quad \sum_{x \in \X, a \in \A} \rho(x,a) r_0(x,a) \\  
    \texttt{s.t.}  &\quad \sum_{x \in \X, a \in \A} \rho(x,a) r_i(x,a) \leq c_i, \; \forall i \in \{1,\dots,n-1\}.
\end{align*}
 As $\rho$ is the occupation measure it also needs to satisfy the following constraints $\forall x \in \X$:
\begin{align*}
    \rho(x,a) &\geq 0,  \quad \forall a \in \A \\
    \sum_{x_p \in \X, a \in \A} \rho(x_p,a) (\mathbbm{1}\{ x_p = x\} - p(x|x_p,a))  &= \mathbbm{1}\{ x=x_0 \}
\end{align*}
The above constraints originate from the conservation of probability mass of a stationary distribution on a Markov process.  The state-action visitations should satisfy the single-step transpose Bellman recurrence relation:
\begin{equation*}
    \rho^{\pi}(x,a) = (1-\gamma) \mu(x) \pi(a|x) + \gamma \cdot p_{T}^{\pi} \rho^{\pi}(x,a), 
\end{equation*}
where transpose policy transition operator $p_{T}^{\pi}$ is a linear operator and is the mathematical transpose (or adjoint) of $p^{\pi}$ in the sense that $<y, p^{\pi}x> = <p_{T}^{\pi} y, x>$ for any $x, y$:
\begin{equation*}
    p_{T}^{\pi} \rho(x,a) \doteq \pi(a|s)\sum_{\tilde{x}, \tilde{a}} p(x|\tilde{x}, \tilde{a}) \rho(\tilde{x}, \tilde{a})
\end{equation*}

In conclusion, the complete dual problem can be written as:
\begin{align}
    \label{eq:cmdp-opt}
    \max_{\rho: \X \times \A \rightarrow \mathbb{R}_{+}}  &\quad \sum_{x \in \X, a \in \A} \rho(x,a) r_0(x,a) \\  
    \texttt{s.t.}  &\quad \sum_{x \in \X, a \in \A} \rho(x,a) r_i(x,a) \leq c_i,  \; \forall i \in \{1,\dots,n-1\}, \nonumber \\
    &\quad \sum_{a} \rho(x,a) = \sum_{\tilde{x},\tilde{a}} p(x|\tilde{x},\tilde{a}) \rho(\tilde{x},\tilde{a}) + \mu(x). \tag{$\forall x \in \X$}
\end{align}

The solution of the above problem $\rho^{\star}$ gives the optimal (stochastic) policy of the form:
\begin{align*}
    \piopt(a|x) &= \frac{\rho^{\star}(x,a)}{\sum_a \rho^{\star}(x,a)} , \forall x \in \X, \forall a \in \A. 
\end{align*}

\subsection{Additional results with fixed hyper-parameters}
\label{app:cmdp-fixed-param-results}

\Cref{fig:delta-0x1-params-grid} gives the individual plots for different $\bml, \rho$ combinations corresponding to the plot in \Cref{fig:delta-params-mean}. This is the fixed parameters setting in \Cref{sec:synthetic-experiments} where the same set of parameters are used across different $\bml, \rho$ combinations. Here, we run \ref{eq:s-opt} with $\epsilon \in \{0.01, 0.1, 1.0\}$ and \ref{eq:h-opt} with Doubly Robust IS estimator \citep{jiang2015doubly} and student's t-test.  
The mean results with $\delta=0.9$ can be found in \Cref{fig:extra-delta-0.9-params-mean}. A more detailed plot containing the $\bml, \rho$ wise breakdown can be found in \Cref{fig:extra-delta-0.9-params-grid}.

\begin{figure*}
  \includegraphics[width=\textwidth]{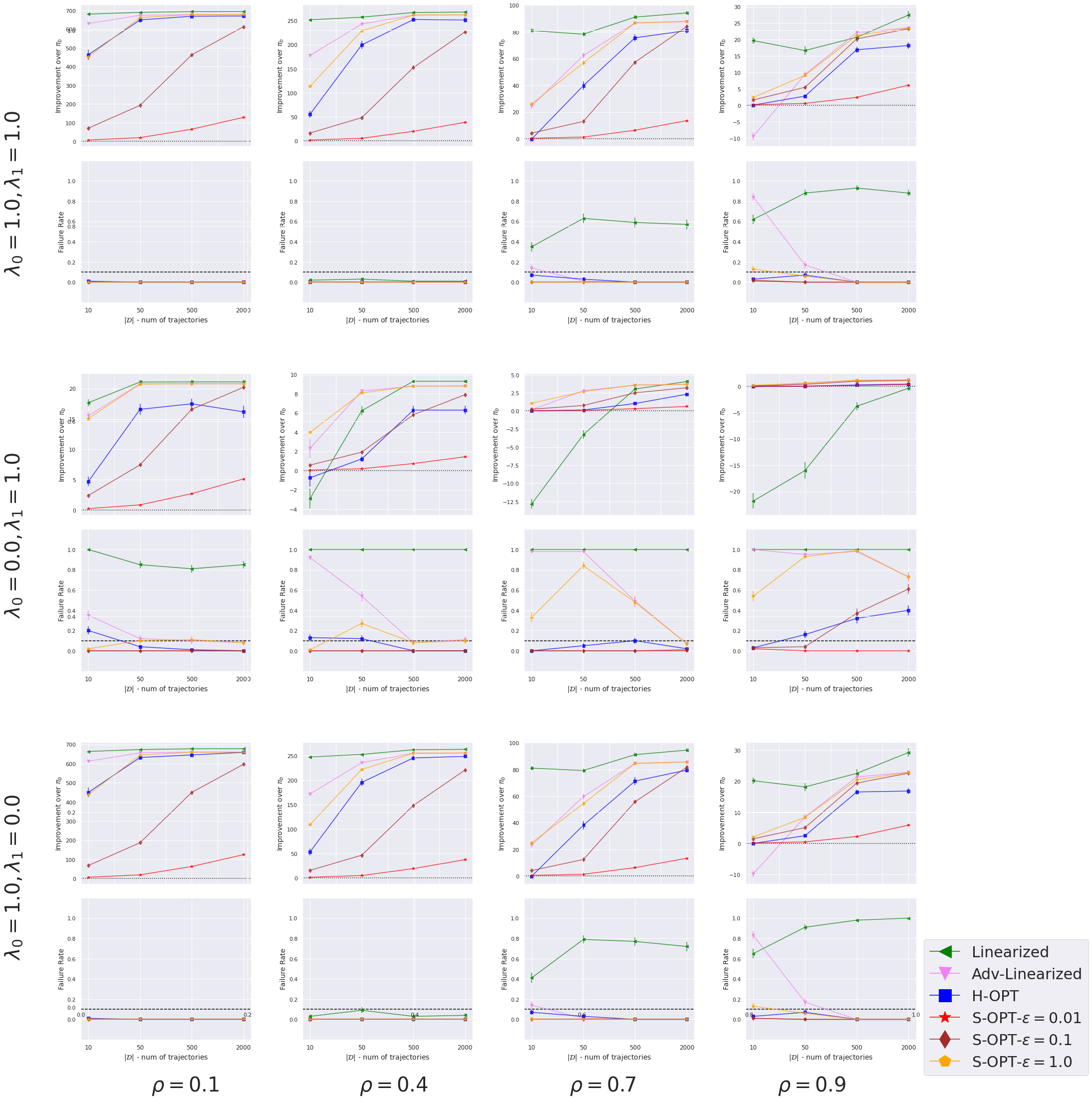}
  \caption{Results on random CMDPs with fixed parameters and $\delta=0.1$. 
  The different agents are represented by different markers and color lines. 
  Each point on the grid, corresponding to a $\bml, \rho$ combination, denotes the mean (with standard error bars) for the 100 randomly generated CMDPs. 
  The x-axis denotes the amount of data the agents were trained on. They y-axis for the top subplot in a grid cell represents the improvement over baseline and the y-axis for bottom subplot in a grid cell denotes the failure rate.
  The dotted black line represents the high-confidence parameter $\delta=0.1$.
  }
  \label{fig:delta-0x1-params-grid}
\end{figure*}

\begin{figure}
    \centering
  \includegraphics[scale=0.4]{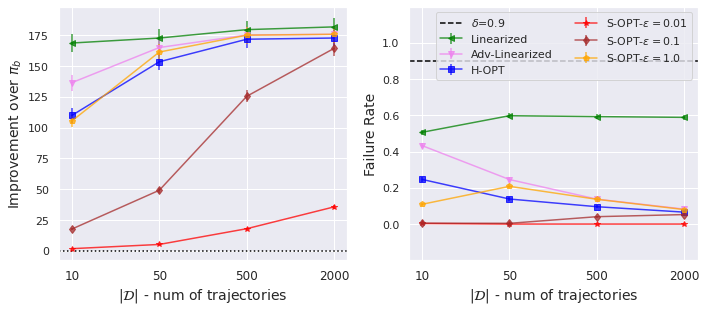}
  \caption{
  Mean results on random CMDPs with fixed parameters and $\delta=0.9$. 
  The different agents are represented by different markers and color lines. 
  Each point on the plot denotes the mean (with standard error bars) for 12 different $\bml,\rho$ combinations for the 100 randomly generated CMDPs (1200 datapoints). 
  The x-axis denotes the amount of data the agents were trained on. They y-axis for the left subplot represents the improvement over baseline and the y-axis for the right subplot in a grid cell denotes the failure rate.
  The dotted black line represents the high-confidence parameter $\delta=0.9$.
  }
  \label{fig:extra-delta-0.9-params-mean}
\end{figure}

\begin{figure*}
  \includegraphics[width=\textwidth]{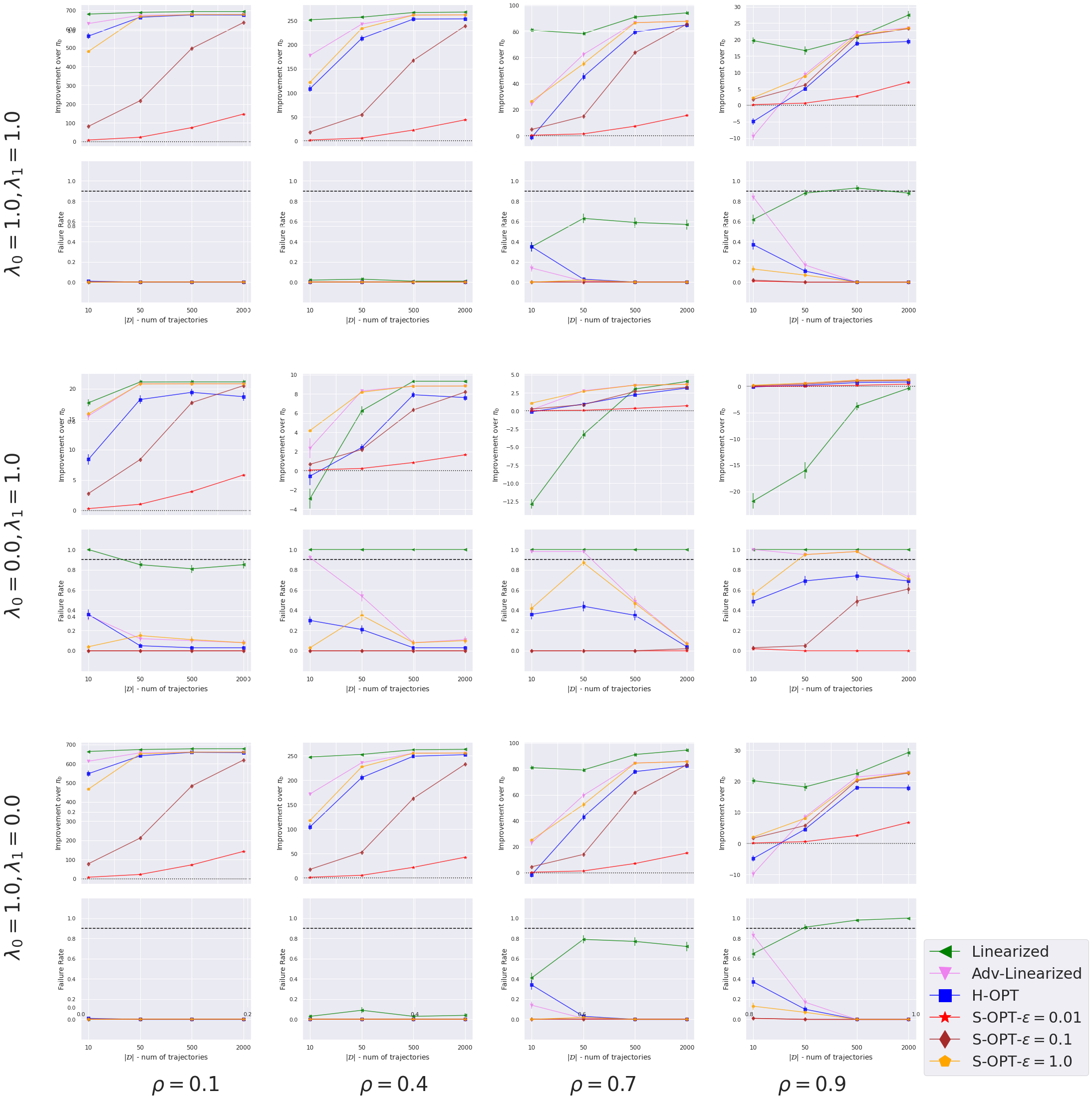}
  \caption{
  Results on random CMDPs with fixed parameters and $\delta=0.9$. 
  The different agents are represented by different markers and color lines. 
  Each point on the grid, corresponding to a $\bml, \rho$ combination, denotes the mean (with standard error bars) for the 100 randomly generated CMDPs. 
  The x-axis denotes the amount of data the agents were trained on. They y-axis for the top subplot in a grid cell represents the improvement over baseline and the y-axis for bottom subplot in a grid cell denotes the failure rate.
  The dotted black line represents the high-confidence parameter $\delta=0.9$.
  }
  \label{fig:extra-delta-0.9-params-grid}
\end{figure*}

\subsection{Additional results with tuned hyper-parameters}
\label{app:cmdp-best-param-results}

\Cref{fig:best-params-grid} gives the individual plots for different $\bml, \rho$ combinations corresponding to the plot in \Cref{fig:best-params-mean}.
The best hyper-parameters are tuned in a single environment and then are used to benchmark the results on 100 random CMDPs. The following procedure is used for selecting the best hyper-parameter candidates: We first generate a random CMDP and run different hyper-parameters on that environment instance. Next, we filter the candidates that violate the safety-constraint in that CMDP instance. From the remaining candidates, we select the one that yields the highest improvement over $\pib$. 

For \ref{eq:s-opt}, we searched for $\epsilon \in \{1e^{-4}, 1e^{-3}, 1e^{-2}, 1e^{-1}, 0.5, 1.0, 2.0, 5.0\}$. For \ref{eq:h-opt}, we used student's t-test with the following $\IS$ estimators: Importance Sampling (IS), Per Decision IS (PDIS), Weighted IS, Weighted PDIS and Doubly Robust (DR) \citep{precup2000eligibility, jiang2015doubly}.

\begin{figure*}
  \includegraphics[width=\textwidth]{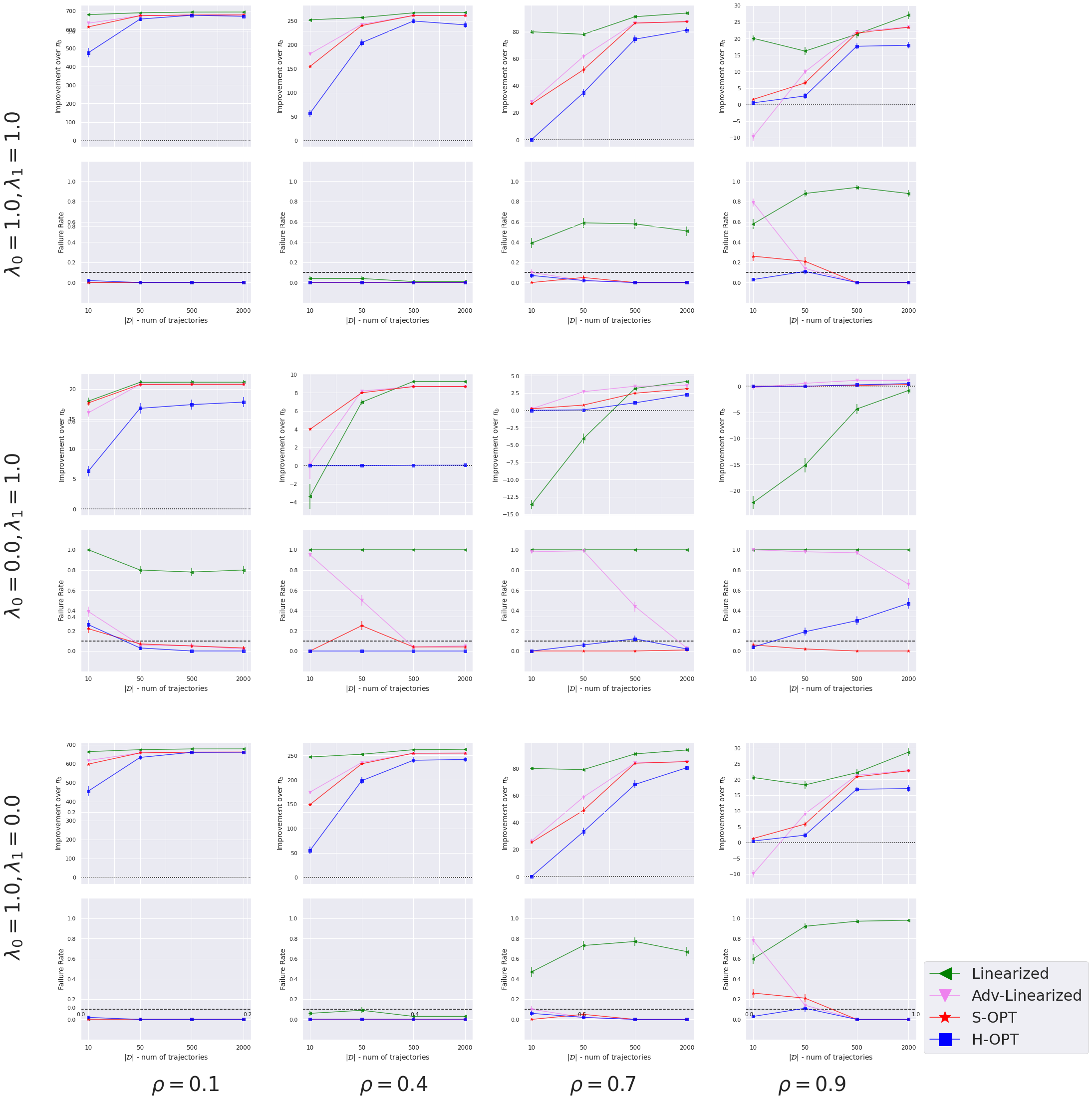}
  \caption{
  Results on 100 random CMDPs for different $\bml, \rho$ combinations with best $\epsilon, \IS$ combination for $\delta=0.1$.
  The different agents are represented by different markers and color lines. 
  Each point on the grid, corresponding to a $\bml, \rho$ combination, denotes the mean (with standard error bars) for the 100 randomly generated CMDPs. 
  The x-axis denotes the amount of data the agents were trained on. They y-axis for the top subplot in a grid cell represents the improvement over baseline and the y-axis for bottom subplot in a grid cell denotes the failure rate.
  The dotted black line represents the high-confidence parameter $\delta=0.1$.
  }
  \label{fig:best-params-grid}
\end{figure*}


We plot the results based on the optimized hyper-parameters for a single CMDP in \Cref{fig:10x10-gridworld-seed-0} . Here, we plot the individual performance w.r.t $r_0$ (goal reward) and $r_1$ (pit reward) for multiple agents along with the baseline's performance.  Instead of working with surrogate measures, we investigate the returns for both $\J{\pi}{r_0}{\mopt}$ and $- \J{\pi}{r_1}{\mopt}$, and see what kind of scenarios lead to violation (all the returns are normalized in $[0,1]$). In \Cref{fig:10x10-gridworld-seed-0}, the intersection of the red and blue lines denotes the performance of the baseline in the true MDP.  As we observed in the mean plots, the Linearized baseline violate most of constraints for all the dataset sizes. The Adv-Linearized baseline violates the constraints mostly for low data settings ($\blacktriangledown$ marker with darker shades). There are more violations for higher values of $\rho$ as the $\pib$ gets better and the task gets tougher. We can observe that both \ref{eq:s-opt} and \ref{eq:h-opt} based agents (denoted by $\star$ and $\blacksquare$ markers) never leave the top-left quadrant and consistently satisfy the constraints. We also observe that the deviation from the origin increases with the increase in dataset size (represented via color of the agent).

\begin{figure*}
  \includegraphics[width=\textwidth]{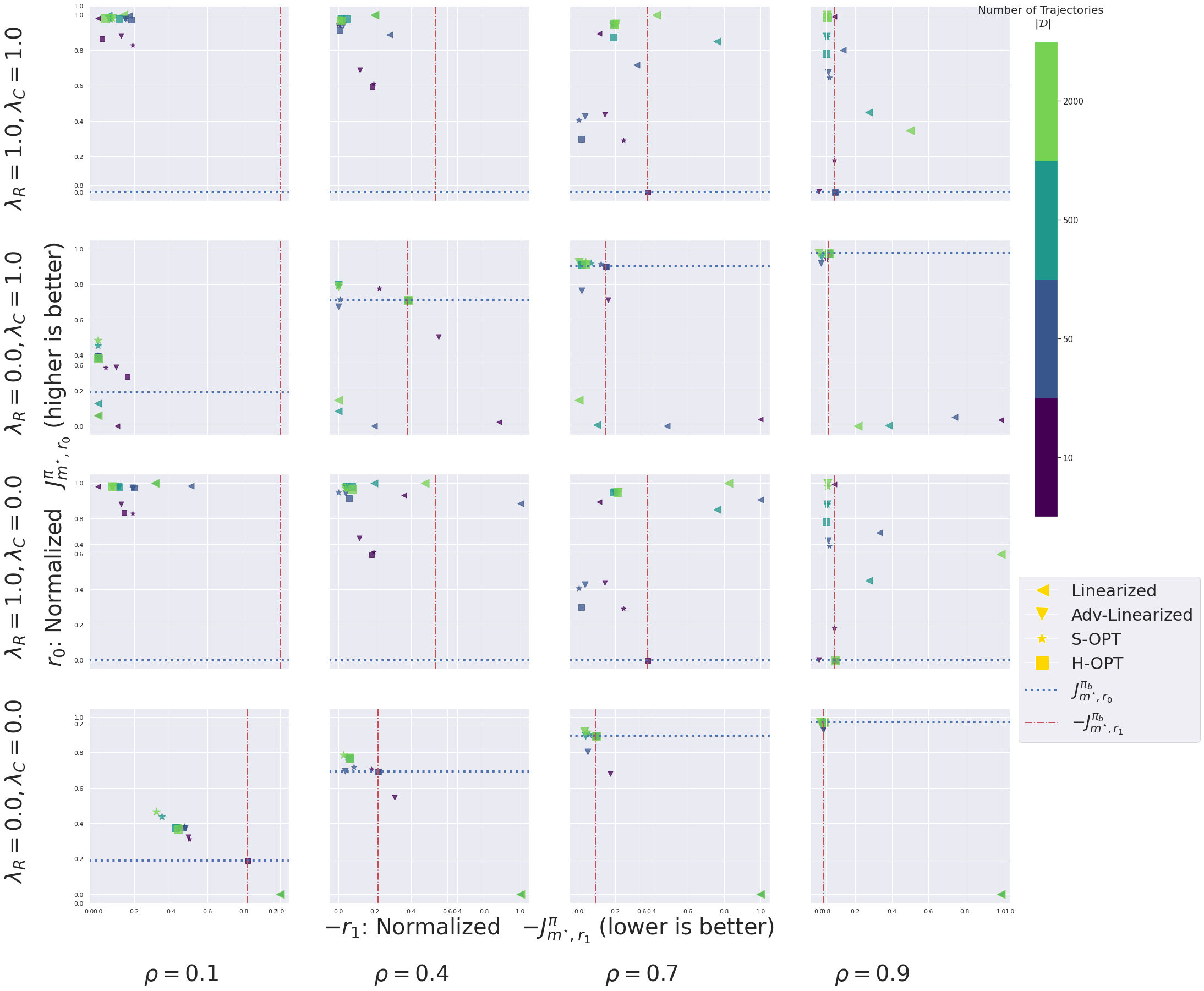}
  \caption{
  Results on a random $10 \times 10$ synthetic CMDP. Each $\bml$ and $\rho$ combination represents a different setting denoted by the corresponding cell in the grid. The different agents are represented by different markers and the color of the marker denotes the amount of data the agent was trained on. 
  The x-axis for individual plots are normalized $- \mathcal{J}^{\pi}_{\mopt, r_1}$ returns (for pits), and y-axis are normalized $\mathcal{J}^{\pi}_{\mopt, r_0}$ returns (for goal).
  The red line denotes the performance of the baseline w.r.t. $- \mathcal{J}^{\pib}_{\mopt, r_1}$, and the blue line for $\mathcal{J}^{\pib}_{\mopt, r_0}$. For each plot in the grid, only the points in the top-left quadrant (defined by baseline's performance via red and blue lines) satisfy the constraint for that task.
  }
  \label{fig:10x10-gridworld-seed-0}
\end{figure*}

\subsection{Comparison with \cite{le2019batch}}
\label{app:lag-baseline}

\begin{figure}[t]
\centering
\begin{subfigure}[b]{1\textwidth}
    \includegraphics[width=1\textwidth]{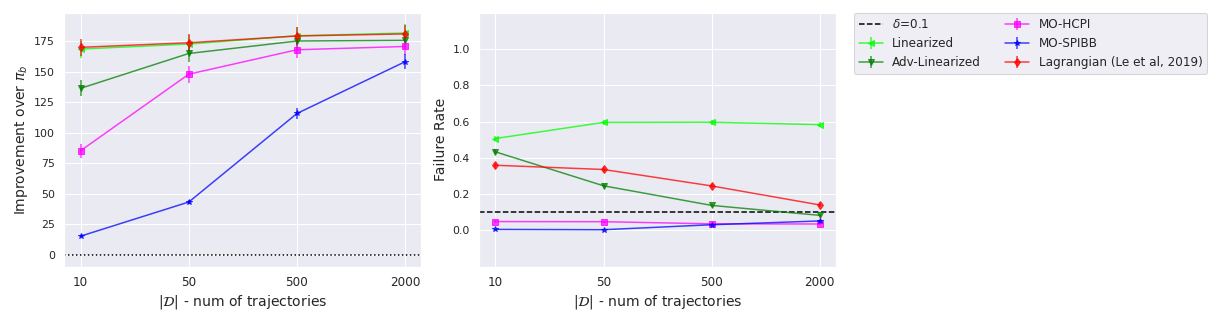}
    \caption{Comparisons of Lagrangian \citep{le2019batch} with $\eta = 0.01$ and MO-SPIBB (\ref{eq:s-opt}) with $\epsilon=0.1$.}
    \label{fig:lag-only-single-sopt} 
\end{subfigure}
\\
\begin{subfigure}[b]{1\textwidth}
    \includegraphics[width=1\textwidth]{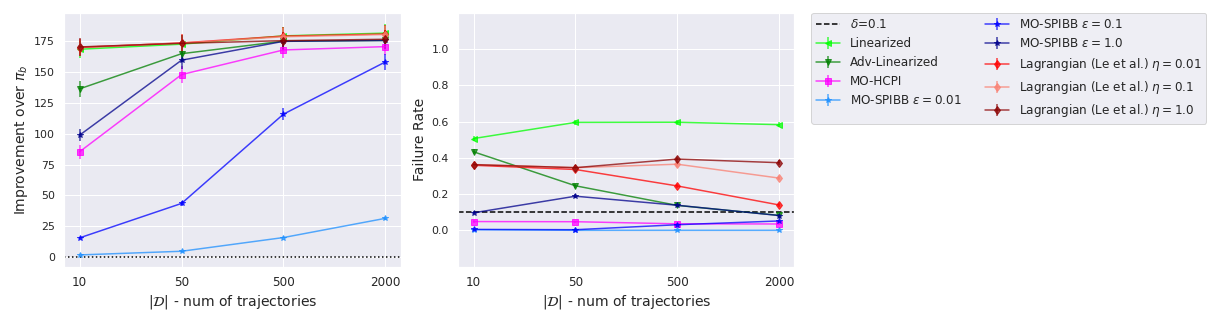}
    \caption{MO-SPIBB (\cref{eq:s-opt}) and Lagrangian \citep{le2019batch} comparisons across different hyper-parameters.}
    \label{fig:lag-multiple-sopt}
\end{subfigure}
\caption[]{
\small
Results on 100 random CMDPs for different $\bml$ and $\rho$ combinations with $\delta=0.1$. The different agents are represented by different markers and colored lines. Each point on the plot denotes the mean (with standard error bars) for 12 different $\bml,\rho$ combinations for the 100 randomly generated CMDPs (1200 datapoints).  The x-axis denotes the amount of data the agents were trained on. 
The y-axis for left subplot in each sub-figure represents the improvement over baseline and the right subplot denotes the failure rate. The dotted black line in the right subplots represents the high-confidence parameter $\delta=0.1$.
\Cref{fig:lag-only-single-sopt} denotes the case when MO-SPIBB (\ref{eq:s-opt}) is run with $\epsilon=0.1$, MO-HCPI (\ref{eq:h-opt}) with $\IS=$ Doubly Robust (DR) estimator with student's t-test concentration inequality, and Lagrangian \citep{le2019batch} with $\eta = 0.01$ . 
\Cref{fig:lag-multiple-sopt} shows how MO-SPIBB and Lagrangian perform across different hyper-parameters.
\label{fig:lag-combined-results}}
\vskip -0.1in
\end{figure}

We test the method by \cite{le2019batch} (henceforth referred to as Lagrangian) in the synthetic navigation CMDP task described in \Cref{sec:synthetic-experiments}. In \Cref{fig:lag-only-single-sopt}, we present the results for the best performing Lagrangian baseline on 100 random CMDPs for different $\bml$ and $\rho$ combinations with $\delta=0.1$. Similar to \Cref{fig:delta-params-mean}, we provide a more detailed plot of how the Lagrangian baseline performs with different hyper-parameters in the above setting in \Cref{fig:lag-multiple-sopt}.


\textbf{Results:} As expected, we observe that the Lagrangian baseline has a high failure rate, particularly in the low-data setting. 
This makes sense as the guarantees provided by \cite{le2019batch} are of the form $\mathcal{J}^\pi_{k,m^{\star}} - \mathcal{J}^{\pi_{b}}_{k, m^{\star}} \geq - \frac{C}{(1-\gamma)^{3/2}}$ (Theorem 4.4 of \cite{le2019batch}), where $C$ is a term that depends on a constant that comes from the Concentrability assumption (Assumption 1 of \cite{le2019batch}). This assumption upper bounds the ratio between the future state-action distributions of any non-stationary policy and the baseline policy under which the dataset was generated by some constant. In other words, it makes assumptions on the quality of the data gathered under the baseline policy. Unfortunately, this assumption cannot be verified in practice, and it is unclear how to get a tractable estimate of this constant. As such, this constant can be arbitrarily large (even infinite) when the baseline policy fails to cover the support of all non-stationary policies, for instance, when the baseline policy is not exploratory enough or when the size of the dataset is small. Hence, we observe a high failure rate of \cite{le2019batch} in the experiments, especially in the low data setting. Compared to \cite{le2019batch}, our performance guarantees do not make any assumptions on the quality of the dataset or the baseline. Therefore, our approach can ensure a low failure rate even in the low-data regime.

\textbf{Implementation details and Hyper-parameters:} We build on top of the publicly available code of \cite{le2019batch} released by the authors and extend it to our setting. 
We are confident that our implementation is correct as we made sure it passes various sanity tests such as convergence of the primal-dual gap and feasibility on access to true MDP parameters.

The algorithm in \cite{le2019batch} (Algorithm 2, Constrained Batch Policy Learning) requires the following hyper-parameters:

\begin{itemize}
    \item Online Learning Subroutine: We use the same online learning algorithm as used by the authors in their experiments, i.e. Exponentiated Gradient \citep{kivinen1997exponentiated}.
    
    \item Duality gap $\omega$ : This denotes the primal-dual gap or the early termination condition. We tried the values in $\set{0.01, 0.001}$ and fix the value to $0.01$.
    
    \item Number of iterations: This parameter denotes the number of iterations for which the Lagrange coefficients should be updated. We experimented in the range $\set{100, 250, 500}$ and set this to $250$.
    
    \item Norm bound $B$: The bound on the norm of Lagrange coefficients vector. We tried the values in $\set{1, 10, 50, 100}$ and fixed it $10$.
    
    \item Learning rate $\eta$: This parameter denotes the learning rate for the update of the Lagrange coefficients via the online learning subroutine. We found that this is the most sensitive variable and we tried with values in $\set{0.005, 0.01, 0.05, 0.1, 0.5, 1.0, 5.0}$. For the final experiments, we benchmark with three different values $(0.01, 0.1, 1.0)$ as mentioned in the \Cref{fig:lag-multiple-sopt}.
    
\end{itemize}

We would like to point out that the hyper-parameter tuning for the Lagrangian baseline can be particularly challenging as in the low-data setting none of the combinations of the above hyper-parameters can ensure a low failure rate even though the duality gap has converged. 


The above experiments show the advantage of our approach over \cite{le2019batch}, particularly in the low-data safety-critical tasks, where our methods can improve over the baseline policy while ensuring a low failure rate.

\subsection{Scaling experiments with number of objectives $d$}
\label{app:cmdp-scaling-experiments}

We experimented with the different number of objectives $d$ to validate if the trends we observed for \ref{eq:s-opt} and \ref{eq:h-opt} in \Cref{sec:synthetic-experiments} also extend to $d>2$. 
In the CMDP formulation, as there can only be one primary reward, we extend the CMDP to include more than 1 type of pits. The extended CMDP now has $d-1$ different kinds of pits and corresponding reward functions, where the agent gets a pit reward of $-1$ if the agent steps into a cell containing that particular kind of pit. We relax the CMDP threshold to $c_i = -10.0$ as the CMDP problem gets harder with more number of pits, and a lower threshold makes the problem of finding $\piopt$ of a random CMDP easier. Therefore, the task objective for the agent in the extended CMDP is to reach the goal in the least amount of steps, such that it can only step into at most 10 pits of every different type. 

We use the same experiment methodology from \Cref{sec:synthetic-experiments}. As the focus is to see how the trends scale with $d$, we fix the $\bml$, with $\lambda_0=1.0$ and the rest of $\lambda_{\ge 1}=0.0$.  We compare \ref{eq:s-opt} and \ref{eq:h-opt} over different $|\D|\in \{ 10, 50, 500, 2000\}$, $\rho \in \{0.1, 0.4, 0.7, 0.9\}$, the fixed set of parameters: $\IS$=DR, $\CI=$student's t-test, $\epsilon\in \{0.001, 0.01, 0.1, 1.0\}$, and $\delta=0.1$.

The results over 10 random CMDPs with fixed parameters can be found in \Cref{fig:scale-exp-10-runs-fixed-delta}. We notice that the trends from \Cref{sec:synthetic-experiments} case still carry till $d\le 1+16$, where for some value of $\epsilon$, \ref{eq:s-opt} can lead to better improvement in $\pib$ while still having failure rate $<\delta$. However, $d > 1+16$ we see there are no obvious trends and both \ref{eq:s-opt} and \ref{eq:h-opt} tend to become very conservative and returning the baseline becomes the best solution choice.

\begin{figure*}
  \includegraphics[width=\textwidth]{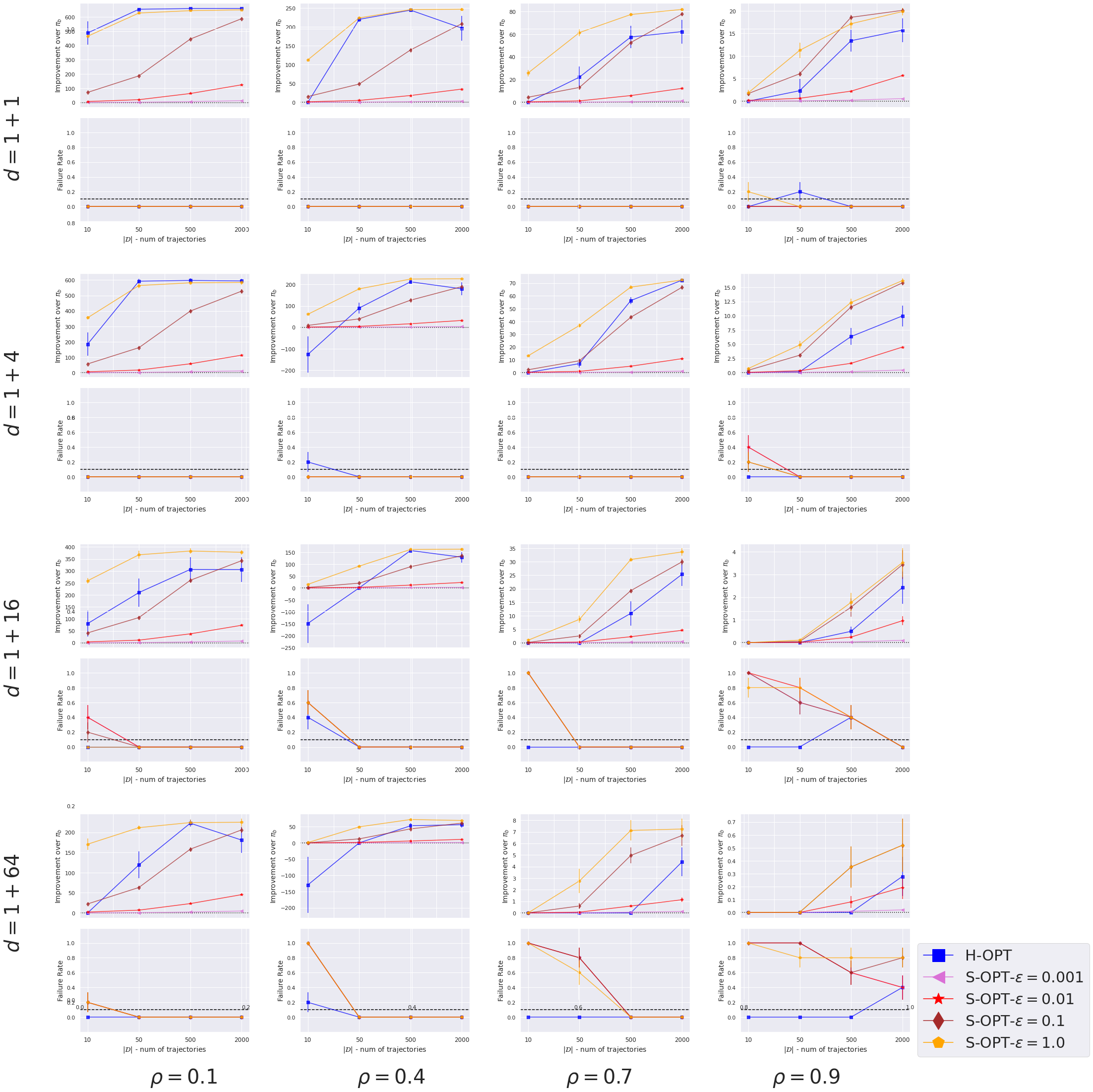}
  \caption{Scaling with $d$ with results on a 10 random CMDPs and $\delta=0.1$. The different agents are represented by different markers and color. Each point on the graph denotes the mean for 100 runs, the standard errors is denoted by the error bars. The x-axis denotes the amount of data the agents were trained on. They y-axis for the top plot in a grid represents the improvement over baseline and the y-axis for bottom plot denotes the failure rate.
  }
  \label{fig:scale-exp-10-runs-fixed-delta}
\end{figure*}

\subsection{Additional details}

For the experiments in \Cref{sec:synthetic-experiments}, on an Intel(R) Xeon(R) Gold 6230 CPU (2.10GHz), the baselines take around 3 seconds to run, and both \ref{eq:s-opt} and \ref{eq:h-opt} take about 5 seconds.

\section{Additional details for sepsis experiments}
\label{app:sepsis-details}

\subsection{Sepsis data and cohort details}
\label{app:sepsis-dataset}

We followed the pre-processing methodology from \cite{tang2020clinician, komorowski2018artificial} and we refer the reader to the original work for more details.

The dosage of prescribed IV fluids and vasopressors is converted into discrete variables to be used as actions for the constructed MDP. Each type of action (IV or vasopressor) is divided into 4 bins (each representing one quantile), and an additional action for "No drug" (0 dose) is also introduced. As such, the $|\A| = 5 \times 5$.
The cohort statistics can be found in \Cref{table:cohort-stats}. The patient data consists of 48 dimensional time-series with features representing attributes such as demographics, vitals and lab work results (\Cref{table:sepsis-features-summary}). 
The patient data is discretized into 4-hour windows, each of which is pre-processed to be treated as a single time-step. The state-space and is discretized using a k-means based clustering algorithm to map the states to $750$ clusters. Two additional absorbing states are added for death and survival ($|\X|=752$).

\begin{table}[h]
    \centering
    \caption{Cohort statistics after following the data pre-processing methodology from \cite{tang2020clinician, komorowski2018artificial}.}
    \label{table:cohort-stats}
    \vskip 0.1in
    \begin{tabular}{lrrrr}
    \toprule
     Survivors &     N & \% Female & Mean Age & Hours in ICU \\
    \midrule
     Survivors & 18066 &    44.5\% &     64.1 &         56.6 \\
    Non-survivors &  2888 &    42.9\% &     68.8 &         60.9 \\
    \bottomrule
\end{tabular}
\end{table}


\begin{table*}[h]
    \centering
    \caption{Summary of the patient state features from \citep[][Table 3]{tang2020clinician}.}
    \label{table:sepsis-features-summary}
    \begin{tabular}{lp{9cm}}
        \toprule
        Demographics/Static &  Age, Gender, SOFA, Shock Index, Elixhauser, SIRS, Re-admission, GCS - Glasgow Coma Scale \\
        \midrule
        Lab values & Albumin, Arterial pH, Calcium, Glucose, Hemoglobin, Magnesium, PTT - Partial Thromboplastin Time, Potassium, SGPT - Serum Glutamic-Pyruvic Transaminase, Arterial Blood Gas, Blood Urea Nitrogen, Chloride, Bicarbonate, International Normalized Ratio, Sodium, Arterial Lactate, CO2, Creatinine, Ionised Calcium, Prothrombin Time, Platelets Count, SGOT - Serum Glutamic-Oxaloacetic Transaminase, Total bilirubin, White Blood Cell Count \\
        \midrule
        Vital signs & Diastolic Blood Pressure, Systolic Blood Pressure, Mean Blood Pressure, PaCO2, PaO2, FiO2, PaO/FiO2 ratio, Respiratory Rate, Temperature (Celsius), Weight (kg), Heart Rate, SpO2 \\
        \midrule
        Intake and output events & Fluid Output - 4 hourly period, Total Fluid Output, Mechanical Ventilation \\
        \bottomrule
    \end{tabular}

\end{table*}

\subsection{Performance on changing hyper-parameters}
\label{app:sepsis-hyperparams}

For the experiments in \Cref{sec:sepsis-experiments}, we treat $\delta$ as a hyper-parameter. For \ref{eq:s-opt} instead of searching over both $\delta$ and $\epsilon$, we follow the strategy proposed in Soft-SPIBB: fix the $\delta=1.0$ and only search over $\epsilon$. 
For \ref{eq:h-opt}, we found that only DR and WDR gave reliable off-policy estimates so report the results with both of them with different $\delta$. As in previous sections, we used student's t-test as the choice of concentration inequality for \ref{eq:h-opt}.

\subsubsection{\ref{eq:s-opt} parameters}

Here, we fix $\delta=1.0$ and try with different values of the hyper-parameter $\epsilon$ and directly report the results directly on the test set. The results are presented in \Cref{table:app-s-opt-test}.

\subsubsection{\ref{eq:h-opt} parameters}

We run with different values of the hyper-parameter $\delta$ and directly report the results directly on the test set for different $\IS$ estimators. The results for DR estimator are presented in \Cref{table:app-hopt-DR-Adv} and for WDR estimator are presented in \Cref{table:app-hopt-WDR-Adv}.


\subsection{Additional qualitative Analysis}
\label{app:sepsis-qual-analysis}
We calculate how many rare-actions are recommended by different solution policies and compare them with the most common actions taken by the clinicians.
For each state, for the action recommended by a solution policy, we calculate the frequency with which that state-action was observed in the training data and calculate the percentage of time that state-action pair was observed among all the possible actions taken from that state.
Across all the states, the actions suggested by the traditional single-objective RL baseline are observed only 3\% of the time on average (5.3 observations per state). Whereas, the actions most commonly chosen by the clinicians  are observed 51.4\% of the time on average (138.2 observations per state). We study this behavior for two of the policies returned by MO-SPIBB that deviate the most from the baseline: for the policy returned by \ref{eq:s-opt} ($\bml=[1,1]$) the recommended actions are observed 24.8\% of time on average (61.0 observations per state) and for  \ref{eq:s-opt} ($\bml=[0,1]$) the recommended actions are observed 23.4\% of times (56.14 observations per state).

\subsection{Additional details}

For the experiments in \Cref{sec:sepsis-experiments}, on an Intel(R) Xeon(R) Gold 6230 CPU (2.10GHz), running the Linearized baseline takes around 30 seconds, Adv-Linearized takes around 60 seconds, \ref{eq:s-opt} take about 90-120 seconds and \ref{eq:h-opt} takes about 90 seconds.


\begin{table}[h]
    \centering
    \caption{
    Performance of various \ref{eq:s-opt} policy candidates (with different $\epsilon$) using DR and WDR estimation with standard errors on 10 random splits of the TEST dataset. 
    The red cells denote the corresponding safety constraint violation, i.e, either $\mathcal{J}_{0}^{\pi} < \mathcal{J}_{0}^{\pib}$ or $-\mathcal{J}_{1}^{\pi} > -\mathcal{J}_{1}^{\pib}$.}
    \label{table:app-s-opt-test}
    \vskip 0.1in
    \begin{adjustbox}{max width=1\textwidth,center}
    \begin{tabular}{cccccc}
    \toprule
    \multicolumn{1}{c}{User preferences $(\bml)$} & \multicolumn{1}{c}{Policy} & \multicolumn{2}{c}{Survival return ($\mathcal{J}_0$)} & \multicolumn{2}{c}{Rare-treatment return ($- \mathcal{J}_1$)} \\
    \hline
    & & DR & WDR & DR & WDR  \\  \cline{3-6}
    & Clinician's ($\pib$) & 64.78 $\pm$ 0.90 & 64.78 $\pm$ 0.90          & 13.58 $\pm$ 0.19 & 13.58 $\pm$ 0.19  \\
    \midrule
    \multirow{4}{*}{$[\lambda_0=1.0, \lambda_1 = 0.0]$} 
& Linearized & 97.68 $\pm$ 0.22 & 97.58 $\pm$ 0.20   & \textcolor{red}{27.64 $\pm$ 1.11 }& \textcolor{red}{27.84 $\pm$ 1.09 } \\ 
& \ref{eq:s-opt}, $\epsilon=0.0$  & 64.78 $\pm$ 0.90 & 64.78 $\pm$ 0.90   & 13.58 $\pm$ 0.19 & 13.58 $\pm$ 0.19  \\
& \ref{eq:s-opt}, $\epsilon=0.001$  & 64.91 $\pm$ 0.90 & 64.91 $\pm$ 0.90   & 13.56 $\pm$ 0.19 & 13.56 $\pm$ 0.19  \\
& \ref{eq:s-opt}, $\epsilon=0.01$  & 66.11 $\pm$ 0.87 & 66.05 $\pm$ 0.86   & 13.42 $\pm$ 0.20 & 13.46 $\pm$ 0.20  \\
& \ref{eq:s-opt}, $\epsilon=0.1$  & 73.70 $\pm$ 0.84 & 71.96 $\pm$ 0.69   & 12.30 $\pm$ 0.39 & \textcolor{red}{13.80 $\pm$ 0.33 }  \\
& \ref{eq:s-opt}, $\epsilon=0.5$  & 78.19 $\pm$ 0.54 & 81.01 $\pm$ 0.36   & \textcolor{red}{16.21 $\pm$ 0.49 }& 13.10 $\pm$ 0.31  \\
& \ref{eq:s-opt}, $\epsilon=1.0$  & 84.03 $\pm$ 0.48 & 87.11 $\pm$ 0.33   & \textcolor{red}{15.54 $\pm$ 0.59 }& 12.17 $\pm$ 0.59  \\
& \ref{eq:s-opt}, $\epsilon=2.5$  & 90.05 $\pm$ 0.25 & 91.37 $\pm$ 0.20   & \textcolor{red}{15.35 $\pm$ 0.72 }& 13.53 $\pm$ 0.56  \\
& \ref{eq:s-opt}, $\epsilon=5.0$  & 91.58 $\pm$ 0.49 & 92.66 $\pm$ 0.28   & \textcolor{red}{15.39 $\pm$ 0.59 }& \textcolor{red}{13.71 $\pm$ 0.38 }\\
& \ref{eq:s-opt}, $\epsilon=10.0$  & 91.64 $\pm$ 0.47 & 92.68 $\pm$ 0.23   & \textcolor{red}{15.19 $\pm$ 0.59 }& 13.56 $\pm$ 0.42  \\
& \ref{eq:s-opt}, $\epsilon=\infty$  & 91.62 $\pm$ 0.46 & 92.68 $\pm$ 0.23   & \textcolor{red}{15.18 $\pm$ 0.59 }& 13.56 $\pm$ 0.42  \\
    \midrule
    \multirow{4}{*}{$[\lambda_0=1.0, \lambda_1 = 1.0]$}
    & Linearized & 87.17 $\pm$ 0.48 & 89.11 $\pm$ 0.37   & 2.41 $\pm$ 0.47 & 1.52 $\pm$ 0.41\\
& \ref{eq:s-opt}, $\epsilon=0.0$  & 64.78 $\pm$ 0.90 & 64.78 $\pm$ 0.90   & 13.58 $\pm$ 0.19 & 13.58 $\pm$ 0.19\\
& \ref{eq:s-opt}, $\epsilon=0.001$  & 64.90 $\pm$ 0.90 & 64.90 $\pm$ 0.90   & 13.53 $\pm$ 0.19 & 13.54 $\pm$ 0.19\\
& \ref{eq:s-opt}, $\epsilon=0.01$  & 66.02 $\pm$ 0.88 & 65.94 $\pm$ 0.87   & 13.15 $\pm$ 0.20 & 13.20 $\pm$ 0.20\\
& \ref{eq:s-opt}, $\epsilon=0.1$  & 74.34 $\pm$ 0.78 & 72.04 $\pm$ 0.87   & 9.32 $\pm$ 0.29 & 10.48 $\pm$ 0.45\\
& \ref{eq:s-opt}, $\epsilon=0.5$  & 76.47 $\pm$ 0.50 & 78.42 $\pm$ 0.41   & 7.61 $\pm$ 0.44 & 5.02 $\pm$ 0.17\\
& \ref{eq:s-opt}, $\epsilon=1.0$  & 81.39 $\pm$ 0.46 & 84.54 $\pm$ 0.36   & 4.64 $\pm$ 0.40 & 2.38 $\pm$ 0.22 \\
& \ref{eq:s-opt}, $\epsilon=2.5$  & 86.26 $\pm$ 0.33 & 88.09 $\pm$ 0.24   & 1.98 $\pm$ 0.28 & 1.14 $\pm$ 0.27  \\
& \ref{eq:s-opt}, $\epsilon=5.0$  & 86.76 $\pm$ 0.47 & 88.55 $\pm$ 0.22   & 2.52 $\pm$ 0.48 & 1.55 $\pm$ 0.41\\
& \ref{eq:s-opt}, $\epsilon=10.0$  & 86.77 $\pm$ 0.49 & 88.58 $\pm$ 0.25   & 2.53 $\pm$ 0.50 & 1.57 $\pm$ 0.43  \\
& \ref{eq:s-opt}, $\epsilon=\infty$  & 86.77 $\pm$ 0.49 & 88.58 $\pm$ 0.25   & 2.53 $\pm$ 0.50 & 1.57 $\pm$ 0.43  \\
    \midrule
    \multirow{4}{*}{$[\lambda_0=0.0, \lambda_1 = 0.0]$}
    & Linearized & \textcolor{red}{-89.39 $\pm$ 0.43} & \textcolor{red}{-90.90 $\pm$ 0.29 }  & \textcolor{red}{22.99 $\pm$ 0.40 }& \textcolor{red}{22.81 $\pm$ 0.30 }  \\ 
    & \ref{eq:s-opt}, $\epsilon=0.0$  & 64.78 $\pm$ 0.90 & 64.78 $\pm$ 0.90   & 13.58 $\pm$ 0.19 & 13.58 $\pm$ 0.19\\
& \ref{eq:s-opt}, $\epsilon=0.001$  & 64.80 $\pm$ 0.90 & 64.80 $\pm$ 0.90   & 13.57 $\pm$ 0.19 & 13.57 $\pm$ 0.19\\
& \ref{eq:s-opt}, $\epsilon=0.01$  & 64.92 $\pm$ 0.90 & 64.92 $\pm$ 0.90   & 13.50 $\pm$ 0.19 & 13.51 $\pm$ 0.19\\
& \ref{eq:s-opt}, $\epsilon=0.1$  & 65.78 $\pm$ 0.89 & 65.70 $\pm$ 0.88   & 13.20 $\pm$ 0.20 & 13.25 $\pm$ 0.20  \\
& \ref{eq:s-opt}, $\epsilon=0.5$  & 67.73 $\pm$ 0.82 & 67.22 $\pm$ 0.88   & 13.24 $\pm$ 0.24 & 13.55 $\pm$ 0.33\\
& \ref{eq:s-opt}, $\epsilon=1.0$  & 69.12 $\pm$ 0.75 & 67.90 $\pm$ 0.84   & 13.57 $\pm$ 0.27 & \textcolor{red}{14.39 $\pm$ 0.44 }\\
& \ref{eq:s-opt}, $\epsilon=2.5$  & 71.00 $\pm$ 0.63 & 68.28 $\pm$ 0.46   & \textcolor{red}{14.27 $\pm$ 0.30 }& \textcolor{red}{15.73 $\pm$ 0.40 }  \\
& \ref{eq:s-opt}, $\epsilon=5.0$  & 71.95 $\pm$ 0.54 & 69.27 $\pm$ 0.63   & \textcolor{red}{15.29 $\pm$ 0.39 }& \textcolor{red}{16.12 $\pm$ 0.70 }  \\
& \ref{eq:s-opt}, $\epsilon=10.0$  & 72.73 $\pm$ 0.64 & 71.17 $\pm$ 0.65   & \textcolor{red}{16.59 $\pm$ 0.37 }& \textcolor{red}{16.21 $\pm$ 0.41 }\\
& \ref{eq:s-opt}, $\epsilon=\infty$  & \textcolor{red}{60.27 $\pm$ 0.49} & \textcolor{red}{61.44 $\pm$ 0.85 }  & \textcolor{red}{18.40 $\pm$ 0.27 }& \textcolor{red}{15.36 $\pm$ 0.58 }  \\
    \midrule
    \multirow{4}{*}{$[\lambda_0=0.0, \lambda_1 = 1.0]$}
& Linearized & \textcolor{red}{58.27 $\pm$ 2.18} & \textcolor{red}{60.52 $\pm$ 2.07 }  & 0.04 $\pm$ 0.03 & 0.02 $\pm$ 0.01  \\ 
    & \ref{eq:s-opt}, $\epsilon=0.0$  & 64.78 $\pm$ 0.90 & 64.78 $\pm$ 0.90   & 13.58 $\pm$ 0.19 & 13.58 $\pm$ 0.19  \\
& \ref{eq:s-opt}, $\epsilon=0.001$  & 64.83 $\pm$ 0.90 & 64.83 $\pm$ 0.90   & 13.52 $\pm$ 0.19 & 13.52 $\pm$ 0.19  \\
 & \ref{eq:s-opt}, $\epsilon=0.01$  & 65.36 $\pm$ 0.88 & 65.27 $\pm$ 0.88   & 12.96 $\pm$ 0.19 & 13.01 $\pm$ 0.19  \\
 & \ref{eq:s-opt}, $\epsilon=0.1$  & 71.35 $\pm$ 0.96 & 69.29 $\pm$ 0.92   & 7.75 $\pm$ 0.19 & 8.30 $\pm$ 0.18\\
 & \ref{eq:s-opt}, $\epsilon=0.5$  & 71.01 $\pm$ 0.72 & 71.30 $\pm$ 0.68   & 2.54 $\pm$ 0.37 & 1.50 $\pm$ 0.11  \\
 & \ref{eq:s-opt}, $\epsilon=1.0$  & 74.19 $\pm$ 0.57 & 76.11 $\pm$ 0.57   & 0.90 $\pm$ 0.14 & 0.34 $\pm$ 0.09  \\
 & \ref{eq:s-opt}, $\epsilon=2.5$  & 76.42 $\pm$ 0.61 & 77.20 $\pm$ 0.72   & 0.10 $\pm$ 0.06 & 0.06 $\pm$ 0.04  \\
 & \ref{eq:s-opt}, $\epsilon=5.0$  & 76.08 $\pm$ 0.65 & 76.87 $\pm$ 0.74   & 0.07 $\pm$ 0.05 & 0.05 $\pm$ 0.03\\
 & \ref{eq:s-opt}, $\epsilon=10.0$  & 76.07 $\pm$ 0.65 & 76.87 $\pm$ 0.73   & 0.07 $\pm$ 0.05 & 0.04 $\pm$ 0.03\\
 & \ref{eq:s-opt}, $\epsilon=\infty$  & 76.05 $\pm$ 0.65 & 76.85 $\pm$ 0.72   & 0.07 $\pm$ 0.05 & 0.04 $\pm$ 0.03\\
 \bottomrule
    \addtocounter{table}{-1} 
    \end{tabular}
    \end{adjustbox}
\end{table}

\begin{table}[h]
    \centering
    \caption{
    Performance of various \ref{eq:h-opt} policy candidates (with different $\delta$) using $\IS=$ DR estimator with standard errors on 10 random splits of the TEST dataset. 
    The red cells denote the corresponding safety constraint violation, i.e, either $\mathcal{J}_{0}^{\pi} < \mathcal{J}_{0}^{\pib}$ or $-\mathcal{J}_{1}^{\pi} > -\mathcal{J}_{1}^{\pib}$.}
    \label{table:app-hopt-DR-Adv}
    \vskip 0.1in
    \begin{adjustbox}{max width=1\textwidth,center}
    \begin{tabular}{cccccc}
    \toprule
    \multicolumn{1}{c}{User preferences $(\bml)$} & \multicolumn{1}{c}{Policy} & \multicolumn{2}{c}{Survival return ($\mathcal{J}_0$)} & \multicolumn{2}{c}{Rare-treatment return ($- \mathcal{J}_1$)} \\
    \hline
    & & DR & WDR & DR & WDR  \\  \cline{3-6}
    & Clinician's ($\pib$) & 64.78 $\pm$ 0.90 & 64.78 $\pm$ 0.90          & 13.58 $\pm$ 0.19 & 13.58 $\pm$ 0.19  \\
    \midrule
    \multirow{4}{*}{$[\lambda_0=1.0, \lambda_1 = 0.0]$} 
    & Linearized & 97.68 $\pm$ 0.22 & 97.58 $\pm$ 0.20   & \textcolor{red}{27.64 $\pm$ 1.11 }& \textcolor{red}{27.84 $\pm$ 1.09 } \\ 
    & \ref{eq:h-opt}, $\delta=0.1$  & 65.95 $\pm$ 0.00 & 65.95 $\pm$ 0.00   & 13.37 $\pm$ 0.00 & 13.37 $\pm$ 0.00\\
    & \ref{eq:h-opt},  $\delta=0.3$  & 65.95 $\pm$ 0.00 & 65.95 $\pm$ 0.00   & 13.37 $\pm$ 0.00 & 13.37 $\pm$ 0.00\\
    & \ref{eq:h-opt}, $\delta=0.5$  & 65.95 $\pm$ 0.00 & 65.95 $\pm$ 0.00   & 13.37 $\pm$ 0.00 & 13.37 $\pm$ 0.00\\
    & \ref{eq:h-opt}, $\delta=0.7$  & 65.95 $\pm$ 0.00 & 65.95 $\pm$ 0.00   & 13.37 $\pm$ 0.00 & 13.37 $\pm$ 0.00\\
    & \ref{eq:h-opt}, $\delta=0.9$  & 65.95 $\pm$ 0.00 & 65.95 $\pm$ 0.00   & 13.37 $\pm$ 0.00 & 13.37 $\pm$ 0.00\\
    \midrule
    \multirow{4}{*}{$[\lambda_0=1.0, \lambda_1 = 1.0]$}
    & Linearized & 87.17 $\pm$ 0.48 & 89.11 $\pm$ 0.37   & 2.41 $\pm$ 0.47 & 1.52 $\pm$ 0.41\\
    & \ref{eq:h-opt}, $\delta=0.1$  & 86.37 $\pm$ 0.00 & 88.03 $\pm$ 0.00   & 2.58 $\pm$ 0.00 & 1.43 $\pm$ 0.00\\
    & \ref{eq:h-opt}, $\delta=0.3$  & 86.37 $\pm$ 0.00 & 88.03 $\pm$ 0.00   & 2.58 $\pm$ 0.00 & 1.43 $\pm$ 0.00\\
    & \ref{eq:h-opt}, $\delta=0.5$  & 86.37 $\pm$ 0.00 & 88.03 $\pm$ 0.00   & 2.58 $\pm$ 0.00 & 1.43 $\pm$ 0.00\\
    & \ref{eq:h-opt}, $\delta=0.7$  & 86.37 $\pm$ 0.00 & 88.03 $\pm$ 0.00   & 2.58 $\pm$ 0.00 & 1.43 $\pm$ 0.00\\
    & \ref{eq:h-opt}, $\delta=0.9$  & 86.37 $\pm$ 0.00 & 88.03 $\pm$ 0.00   & 2.58 $\pm$ 0.00 & 1.43 $\pm$ 0.00  \\
    \midrule
    \multirow{4}{*}{$[\lambda_0=0.0, \lambda_1 = 0.0]$}
    & Linearized & \textcolor{red}{-89.39 $\pm$ 0.43} & \textcolor{red}{-90.90 $\pm$ 0.29 }  & \textcolor{red}{22.99 $\pm$ 0.40 }& \textcolor{red}{22.81 $\pm$ 0.30 }  \\ 
    & \ref{eq:h-opt}, $\delta=0.1$  & 65.95 $\pm$ 0.00 & 65.95 $\pm$ 0.00   & 13.37 $\pm$ 0.00 & 13.37 $\pm$ 0.00  \\
    & \ref{eq:h-opt}, $\delta=0.3$  & 65.95 $\pm$ 0.00 & 65.95 $\pm$ 0.00   & 13.37 $\pm$ 0.00 & 13.37 $\pm$ 0.00\\
    & \ref{eq:h-opt}, $\delta=0.5$  & 65.95 $\pm$ 0.00 & 65.95 $\pm$ 0.00   & 13.37 $\pm$ 0.00 & 13.37 $\pm$ 0.00\\
    & \ref{eq:h-opt}, $\delta=0.7$  & 68.28 $\pm$ 0.00 & \textcolor{red}{63.25 $\pm$ 0.00 }  & \textcolor{red}{14.16 $\pm$ 0.00 }& \textcolor{red}{16.41 $\pm$ 0.00 } \\
    & \ref{eq:h-opt}, $\delta=0.9$  & 68.28 $\pm$ 0.00 & \textcolor{red}{63.25 $\pm$ 0.00 }  & \textcolor{red}{14.16 $\pm$ 0.00 }& \textcolor{red}{16.41 $\pm$ 0.00 }\\
    \midrule
    \multirow{4}{*}{$[\lambda_0=0.0, \lambda_1 = 1.0]$}
    & Linearized & \textcolor{red}{58.27 $\pm$ 2.18} & \textcolor{red}{60.52 $\pm$ 2.07 }  & 0.04 $\pm$ 0.03 & 0.02 $\pm$ 0.01  \\ 
    & \ref{eq:h-opt}, $\delta=0.1$  & 76.54 $\pm$ 0.00 & 77.55 $\pm$ 0.00   & 0.09 $\pm$ 0.00 & 0.05 $\pm$ 0.00\\
    & \ref{eq:h-opt}, $\delta=0.3$  & 76.54 $\pm$ 0.00 & 77.55 $\pm$ 0.00   & 0.09 $\pm$ 0.00 & 0.05 $\pm$ 0.00\\
    & \ref{eq:h-opt}, $\delta=0.5$  & 76.54 $\pm$ 0.00 & 77.55 $\pm$ 0.00   & 0.09 $\pm$ 0.00 & 0.05 $\pm$ 0.00\\
    & \ref{eq:h-opt}, $\delta=0.7$  & 76.54 $\pm$ 0.00 & 77.55 $\pm$ 0.00   & 0.09 $\pm$ 0.00 & 0.05 $\pm$ 0.00\\
    & \ref{eq:h-opt}, $\delta=0.9$  & 76.54 $\pm$ 0.00 & 77.55 $\pm$ 0.00   & 0.09 $\pm$ 0.00 & 0.05 $\pm$ 0.00  \\
    \bottomrule
    \addtocounter{table}{-1} 
    \end{tabular}
    \end{adjustbox}
\end{table}

\begin{table}[h]
    \centering
    \caption{
    Performance of various \ref{eq:h-opt} policy candidates (with different $\delta$) using $\IS=$Weighed DR (WDR) estimator with standard errors on 10 random splits of the TEST dataset. 
    The red cells denote the corresponding safety constraint violation, i.e, either $\mathcal{J}_{0}^{\pi} < \mathcal{J}_{0}^{\pib}$ or $-\mathcal{J}_{1}^{\pi} > -\mathcal{J}_{1}^{\pib}$.}
    \label{table:app-hopt-WDR-Adv}
    \vskip 0.1in
    \begin{adjustbox}{max width=1\textwidth,center}
    \begin{tabular}{cccccc}
    \toprule
    \multicolumn{1}{c}{User preferences $(\bml)$} & \multicolumn{1}{c}{Policy} & \multicolumn{2}{c}{Survival return ($\mathcal{J}_0$)} & \multicolumn{2}{c}{Rare-treatment return ($- \mathcal{J}_1$)} \\
    \hline
    & & DR & WDR & DR & WDR  \\  \cline{3-6}
    & Clinician's ($\pib$) & 64.78 $\pm$ 0.90 & 64.78 $\pm$ 0.90          & 13.58 $\pm$ 0.19 & 13.58 $\pm$ 0.19  \\
    \midrule
    \multirow{4}{*}{$[\lambda_0=1.0, \lambda_1 = 0.0]$} 
    & Linearized & 97.68 $\pm$ 0.22 & 97.58 $\pm$ 0.20   & \textcolor{red}{27.64 $\pm$ 1.11 }& \textcolor{red}{27.84 $\pm$ 1.09 } \\ 
    & \ref{eq:h-opt}, $\delta=0.1$  & 65.95 $\pm$ 0.00 & 65.95 $\pm$ 0.00   & 13.37 $\pm$ 0.00 & 13.37 $\pm$ 0.00\\
    & \ref{eq:h-opt}, $\delta=0.3$  & 65.95 $\pm$ 0.00 & 65.95 $\pm$ 0.00   & 13.37 $\pm$ 0.00 & 13.37 $\pm$ 0.00\\
    & \ref{eq:h-opt}, $\delta=0.5$  & 65.95 $\pm$ 0.00 & 65.95 $\pm$ 0.00   & 13.37 $\pm$ 0.00 & 13.37 $\pm$ 0.00\\
    & \ref{eq:h-opt}, $\delta=0.7$  & 65.95 $\pm$ 0.00 & 65.95 $\pm$ 0.00   & 13.37 $\pm$ 0.00 & 13.37 $\pm$ 0.00\\
    & \ref{eq:h-opt}, $\delta=0.9$  & 91.39 $\pm$ 0.00 & 92.61 $\pm$ 0.00   & \textcolor{red}{15.41 $\pm$ 0.00 }& \textcolor{red}{13.89 $\pm$ 0.00 } \\
    \midrule
    \multirow{4}{*}{$[\lambda_0=1.0, \lambda_1 = 1.0]$}
    & Linearized & 87.17 $\pm$ 0.48 & 89.11 $\pm$ 0.37   & 2.41 $\pm$ 0.47 & 1.52 $\pm$ 0.41\\
    & \ref{eq:h-opt}, $\delta=0.1$  & 86.37 $\pm$ 0.00 & 88.03 $\pm$ 0.00   & 2.58 $\pm$ 0.00 & 1.43 $\pm$ 0.00\\
    & \ref{eq:h-opt}, $\delta=0.3$  & 86.37 $\pm$ 0.00 & 88.03 $\pm$ 0.00   & 2.58 $\pm$ 0.00 & 1.43 $\pm$ 0.00\\
    & \ref{eq:h-opt}, $\delta=0.5$  & 86.37 $\pm$ 0.00 & 88.03 $\pm$ 0.00   & 2.58 $\pm$ 0.00 & 1.43 $\pm$ 0.00  \\
    & \ref{eq:h-opt}, $\delta=0.7$  & 86.37 $\pm$ 0.00 & 88.03 $\pm$ 0.00   & 2.58 $\pm$ 0.00 & 1.43 $\pm$ 0.00\\
    & \ref{eq:h-opt}, $\delta=0.9$  & 86.37 $\pm$ 0.00 & 88.03 $\pm$ 0.00   & 2.58 $\pm$ 0.00 & 1.43 $\pm$ 0.00\\
    \midrule
    \multirow{4}{*}{$[\lambda_0=0.0, \lambda_1 = 0.0]$}
    & Linearized & \textcolor{red}{-89.39 $\pm$ 0.43} & \textcolor{red}{-90.90 $\pm$ 0.29 }  & \textcolor{red}{22.99 $\pm$ 0.40 }& \textcolor{red}{22.81 $\pm$ 0.30 }  \\ 
    & \ref{eq:h-opt}, $\delta=0.1$  & 65.95 $\pm$ 0.00 & 65.95 $\pm$ 0.00   & 13.37 $\pm$ 0.00 & 13.37 $\pm$ 0.00  \\
    & \ref{eq:h-opt}, $\delta=0.3$  & 65.95 $\pm$ 0.00 & 65.95 $\pm$ 0.00   & 13.37 $\pm$ 0.00 & 13.37 $\pm$ 0.00\\
    & \ref{eq:h-opt}, $\delta=0.5$  & 65.95 $\pm$ 0.00 & 65.95 $\pm$ 0.00   & 13.37 $\pm$ 0.00 & 13.37 $\pm$ 0.00  \\
    & \ref{eq:h-opt}, $\delta=0.7$  & 65.95 $\pm$ 0.00 & 65.95 $\pm$ 0.00   & 13.37 $\pm$ 0.00 & 13.37 $\pm$ 0.00\\
    & \ref{eq:h-opt}, $\delta=0.9$  & 65.95 $\pm$ 0.00 & 65.95 $\pm$ 0.00   & 13.37 $\pm$ 0.00 & 13.37 $\pm$ 0.00  \\
    \midrule
    \multirow{4}{*}{$[\lambda_0=0.0, \lambda_1 = 1.0]$}
    & Linearized & \textcolor{red}{58.27 $\pm$ 2.18} & \textcolor{red}{60.52 $\pm$ 2.07 }  & 0.04 $\pm$ 0.03 & 0.02 $\pm$ 0.01  \\ 
    & \ref{eq:h-opt}, $\delta=0.1$  & 76.54 $\pm$ 0.00 & 77.55 $\pm$ 0.00   & 0.09 $\pm$ 0.00 & 0.05 $\pm$ 0.00\\
    & \ref{eq:h-opt}, $\delta=0.3$  & 76.54 $\pm$ 0.00 & 77.55 $\pm$ 0.00   & 0.09 $\pm$ 0.00 & 0.05 $\pm$ 0.00\\
    & \ref{eq:h-opt}, $\delta=0.5$  & 76.54 $\pm$ 0.00 & 77.55 $\pm$ 0.00   & 0.09 $\pm$ 0.00 & 0.05 $\pm$ 0.00\\
    & \ref{eq:h-opt}, $\delta=0.7$  & 76.54 $\pm$ 0.00 & 77.55 $\pm$ 0.00   & 0.09 $\pm$ 0.00 & 0.05 $\pm$ 0.00\\
    & \ref{eq:h-opt}, $\delta=0.9$  & 76.54 $\pm$ 0.00 & 77.55 $\pm$ 0.00   & 0.09 $\pm$ 0.00 & 0.05 $\pm$ 0.00\\
    \bottomrule
    \addtocounter{table}{-1} 
    \end{tabular}
    \end{adjustbox}
\end{table}


\end{document}